%% file: iclr2027_conference.tex
\documentclass{article} 
\usepackage{iclr2027_conference,times}

\input{math_commands.tex}

\usepackage{enumitem}
\usepackage{hyperref}
\usepackage{url}
\usepackage{amsthm}
\newtheorem{proposition}{Proposition}
\usepackage{graphicx} 
\usepackage{amssymb}
\usepackage{wrapfig}
\usepackage{graphicx}
\usepackage{booktabs}
\usepackage{amsthm}
\hypersetup{hidelinks}
\theoremstyle{definition}
\newtheorem{definition}{Definition}
\theoremstyle{plain}
\title{Scene Reconstruction using Ubiquitous Wireless Signals via Diffusion Posterior Inference over physics-grounded representations}

\author{Vahid Yazdnian \& Yasaman Ghasempour \\
Princeton University, Department of Electrical and Computer Engineering \\
\texttt{\{vahidyazdnian,ghasempour\}@princeton.edu} \\
}

\iclrfinalcopy 
\begin{document}

\maketitle
\lhead{Scene Reconstruction Using Ubiquitous Wireless Signals}

\vspace{-8mm}
\begin{abstract}
\vspace{-1mm}
\vspace{-2mm}
Knowing the layout of the physical world matters for many applications, yet the sensors we rely on for it (e.g., cameras and LiDAR) can fail in darkness, in adverse weather, or behind walls. Wireless signals do not face these limitations: every link between a base station and a user is shaped by the buildings it passes, reflects from, or is blocked by, leaving information about the surrounding scene embedded in the wireless channel measured at the endpoints.
However, mapping channel measurements to the environment is a severely ill-posed problem: a measurement records only the structures that interact with the transmitted electromagnetic waves along its propagation paths, and even these observations can often be explained by multiple plausible scenes. We therefore formulate wireless scene reconstruction as posterior inference and develop a hierarchical framework with two complementary stages to approach this problem. A wireless foundation model, pretrained to respect propagation physics, transforms each channel measurement into a representation of the scene structure that shaped the observation. A diffusion model then jointly integrates representations from multiple measurements to model and sample the posterior over building occupancy, with consensus over samples producing the final reconstruction. 
We demonstrate that the framework can accurately recover building footprints from wireless channel measurements alone and show that its performance depends critically on both design components: 
physics-grounded representations expose scene-relevant information, while joint posterior inference coherently integrates complementary observations.

\end{abstract}

\vspace{-6mm}
\section{Introduction}
\vspace{-3mm}
Every wireless link is already an unintentional measurement of its surroundings. A signal that leaves a base station and arrives at a receiver has encountered several components in an environment: blocked by one building, reflected off another, diffracted around a corner. The environment is, in effect, written into the wireless channel measured at the receiver. If that writing could be read back, the radio infrastructure already deployed at scale could be repurposed as a sensor network, mapping the world through occluded regions, in darkness, and in severe weather, where cameras and lidar degrade or fail outright~\citep{ zhao2018throughwall, dodds2026wave}. It would also keep maps current when they change fastest: after an earthquake or a flood, the layout of a neighborhood can change in minutes, and the network is often the only sensor still running. Site-specific digital twins, which today require expensive surveys or hand-built geometry, could also be constructed and kept current from the traffic a network already carries~\citep{hoydis2023sionna, li2024geo2sigmap, testolina2024bostontwin}. The knowledge of the propagation environment can also be used to optimize communication network throughput~\citep{zhao2023nerf2, alkhateeb2023isac}, and could advance integrated communication and sensing, one of the main pillars of next-generation networks~\citep{akyildiz20206g, wang2024genai_isac}. 

Recovering geometry from wave measurements is well studied in acoustics, where room shape is inferred from resolvable echoes and geometric propagation models~\citep{dokmanic2013echoes, elbaba20183d}; recent learning-based approaches relax some of these assumptions and predict floorplans directly~\citep{yeon2024rginet, yeon2024echoscan}, but still rely on echo structures that can be attributed to individual surfaces. In the RF domain, scene reconstruction has primarily relied on dedicated radar sensing systems~\citep{casmin2025survey,borts2026carafe}, recovering occupancy grids or floor plans from millimeter-wave and ultra-wideband reflections using geometric inference, motion-synthesized aperture, structural priors, or learned generative models~\citep{lu2020millimap,sun2022indoor,zhou2025rise,lai2026non}. Recent approaches include diffusion-based indoor scene completion from sparse radar heatmaps~\citep{zhou2025rise} and reconstruction around corners from multi-bounce returns collected by a mobile radar~\citep{lai2026non}. These approaches require dedicated sensing hardware and often movement through the environment, incurring additional hardware and deployment costs.


In this paper, we aim to use the communication signals from already deployed infrastructure for scene reconstruction. The challenge is that extracting environmental components out of a raw wireless channel measurement without any assumptions is a severely ill-posed inverse problem, and the ill-posedness is structural rather than incidental. A measurement is shaped only by the structures its propagation paths interact with, so the likelihood is exactly invariant to the rest of the scene. Even within the observed region, the geometry-to-channel mapping is many-to-one: finite bandwidth and aperture limit the resolution of path delays and angles, so scatterers displaced within those limits yield indistinguishable measurements. Compounding this, a structure impacts the channel only through the \emph{role} it plays for a given link --- reflecting, blocking, or leaving the signal untouched --- and that role varies with where the structure sits relative to the transmitter and receiver. Different occupancy patterns whose structures play the same roles for a given link would therefore be indistinguishable. Hence, a single measurement admits not one scene but a family of them.

In this work, we formulate scene reconstruction as posterior inference over occupancy: the solution is a distribution over scenes, and the observations determine which parts of it are pinned down and which remain free. To our knowledge, this is the first treatment of wireless scene reconstruction that assumes nothing about the number, shape, convexity, or location of scatterers.

Learning this posterior directly from raw channel measurements is impractical.
A single decoder would have to solve two difficult problems at once: work out from finite data how propagation physics maps geometry into a channel, and at the same time learn to generate plausible scenes from a combinatorially large space. We therefore adopt a \emph{hierarchical} design that separates these burdens.  First, a multi-modal wireless foundation model, pretrained to capture electromagnetic propagation physics and align channel observations with propagation-relevant scene structure, maps each measurement to a representation of the scene properties that could have produced it. Second, representations from multiple observations of the same environment are fused to condition a diffusion decoder, which performs posterior inference over occupancy and generates scenes consistent with the observed channels. Finally, we aggregate the sampled scenes through per-cell consensus to obtain the reconstructed occupancy map.

We evaluate reconstruction of arbitrary scene geometries against natural alternatives to each level of the hierarchy on $\sim$700 environments held out from all stages of training, spanning diverse spatial layouts and structural configurations, from dense urban blocks to open areas with isolated structures. Specifically, we show through controlled comparisons that both stages of the hierarchy are essential: independently reconstructing and aggregating per-observation scene estimates markedly underperforms, while conditioning the decoder directly on raw channel measurements collapses. By varying the input representation and posterior inference mechanism independently, we isolate the contribution of each stage. Together, these experiments demonstrate that physics-grounded representation learning followed by generative posterior inference provides the complementary capabilities needed to make reconstruction from sparse wireless observations tractable.

\textbf{Contributions.}
\vspace{-2mm}
\begin{itemize}[leftmargin=1em, itemsep=0pt, topsep=0pt]
\item \textbf{Formulation and theory.} We formulate wireless scene reconstruction as probabilistic posterior inference over occupancy with no parametric prior; characterize its ill-posedness and the resulting recoverable set; and derive finite-sample bounds showing that a consensus estimator over posterior samples is Bayes-optimal per cell and converges to the true scene on the recoverable set as the number of measurements grows.
\item \textbf{Physics-grounded representations.} We show that pretraining a wireless foundation model and aligning its representation with propagation-relevant scene structures yields per-observation embeddings from which the scene occupancy distribution can be recovered. Each embedding captures the portion of the environment that shaped the corresponding measurement while expressing calibrated uncertainty elsewhere.
\item \textbf{Hierarchical generative reconstruction.} We design a diffusion decoder that conditions on the set of per-observation representations and samples the occupancy posterior, and propose a consensus readout that fuses many viewpoints into a single reconstruction without assuming anything about scene structure.

\end{itemize}

\vspace{-4mm}
\section{Preliminaries and Problem Formulation}
\label{sec:formulation}
\vspace{-2mm}
\subsection{Wireless propagation}
\label{sec:propagation}
\vspace{-2mm}
A signal transmitted from a base station reaches a receiver along a set of propagation paths determined by the surrounding environment. Each path is a sequence of interactions with the physical structures of the scene: it may travel directly, reflect specularly off a building, diffract at an edge, or be blocked. For a receiver at position $\mathbf{p}$, propagation through a scene $\mathbf{s}$  thats $\mathcal{P}(\mathbf{s}, \mathbf{p})= \bigl\{(\alpha_\ell,\ \tau_\ell,\ \theta_\ell,\ \phi_\ell)\bigr\}_{\ell=1}^{L}$, where $\alpha_\ell$ is the complex gain, $\tau_\ell$ the delay, and $\theta_\ell,\phi_\ell$ the angles of arrival and departure. Each of these parameters depends on scene geometry: delays are set by the lengths of the path segments, angles by the orientation of the surfaces the wave meets, and gains by free-space loss and the reflection coefficients of the structures encountered.

The receiver does not observe the paths individually. For a wideband multiple-input multiple-output (MIMO) link with $N_{tx}$ transmit antennas, $N_{rx}$ receive antennas, and $N_{sc}$ subcarriers, it observes their superposition in the measured channel $H \in \mathbb{C}^{N_{tx}\times N_{rx}\times N_{sc}}$, whose entry for transmit/receive antenna $t/r$ and subcarrier frequency $f$ is
\vspace{-2mm}
\begin{equation} 
H[t,r,f] \;=\; \sum_{\ell \in \mathcal{P}(\mathbf{s},\mathbf{p})} \alpha_\ell\, a_{r}(\theta_\ell)\, a_{t}^{*}(\phi_\ell)\, e^{-j 2\pi f \tau_\ell}, 
\vspace{-2mm}
\label{eq:pathsum} \end{equation}
with $a_t(\cdot)$ and $a_r(\cdot)$ the array responses (Appendix~\ref{app:forward}). This composition defines a forward operator $H = \mathcal{F}(\mathbf{s}, \mathbf{p})$ mapping scene and link geometry to the observed channel: the scene is written into the channel through the path parameters, in a form the receiver observes only in superposition. Importantly, not all parts of the environment participate in the measurement. Only structures a path interacts with, allowing free propagation, blocking the direct ray, or serving as reflection or diffraction points, enter the path set and thus the channel. We call this region the \emph{propagation-relevant set} of an observation; structures outside it leave the observation exactly unchanged.

\vspace{-2mm}
\subsection{The inverse problem}
\label{sec:inverse}
\vspace{-2mm}
\textbf{Objective.} We observe a scene through the wireless links a deployed network already maintains. A base station serves $K$ users, where user $k$ at position $\mathbf{p}_k$ contributes a channel measurement $H_k$, giving the observation set $\mathcal{O} = \bigl\{(H_k,\mathbf{p}_k)\bigr\}_{k=1}^{K}$. Denoting $a_k$ for the propagation-relevant set of observation $k$, the \emph{observable set} $\mathcal{A}=\bigcup_k a_k$ is the region of the scene influencing at least one measurement. We represent the environment as a two-dimensional occupancy grid over a square region with spatial resolution $\Delta R$, $\mathbf{s}\in\{0,1\}^{N\times N}$, where $s_i=1$ indicates that cell $i$ contains a structure, and seek to recover $\mathbf{s}$ from $\mathcal{O}$, inverting $\mathcal{F}$, without parametric assumptions on the number, shape, convexity, or location of the structures.

\textbf{Why the problem is ill-posed.} Under the standard Hadamard framework~\citep{bakushinsky2012ill}, the problem is ill-posed because the observations do not uniquely determine the scene, despite a feasible solution existing. Non-uniqueness arises from three sources (Appendix~\ref{app:illposed}). First, structures outside $\mathcal{A}$ leave every measurement unchanged, so the likelihood is exactly invariant to them: flipping any combination of those cells yields a scene explaining the observations equally well, and the number of such scenes is exponential in $|\mathcal{A}^{c}|$. Second, even inside $\mathcal{A}$, the forward map is many-to-one, since finite bandwidth and finite aperture resolve delays and angles only to a limit, so scatterers displaced within those limits yield indistinguishable measurements. Third, a structure enters the channel only through the \emph{role} it plays for a given link --- reflecting, blocking, or leaving the signal untouched --- and that role varies with where it sits relative to transmitter and receiver: the same building reflects for one user, blocks another, and is invisible to a third. Scenes whose structures play the same roles for every link are indistinguishable at any resolution, so unlike the second source, this degeneracy is not removed by better hardware. Multiple observations narrow the ambiguity by intersecting the families of scenes each admits, and additional users and reflected paths enlarge $\mathcal{A}$, but the intersection does not collapse to a single scene: the invariance outside $\mathcal{A}$ persists, and the resolution and role ambiguities remain within it.

\begin{figure}[t]
\vspace{-1mm}
    \centering
    \includegraphics[width=0.95\linewidth]{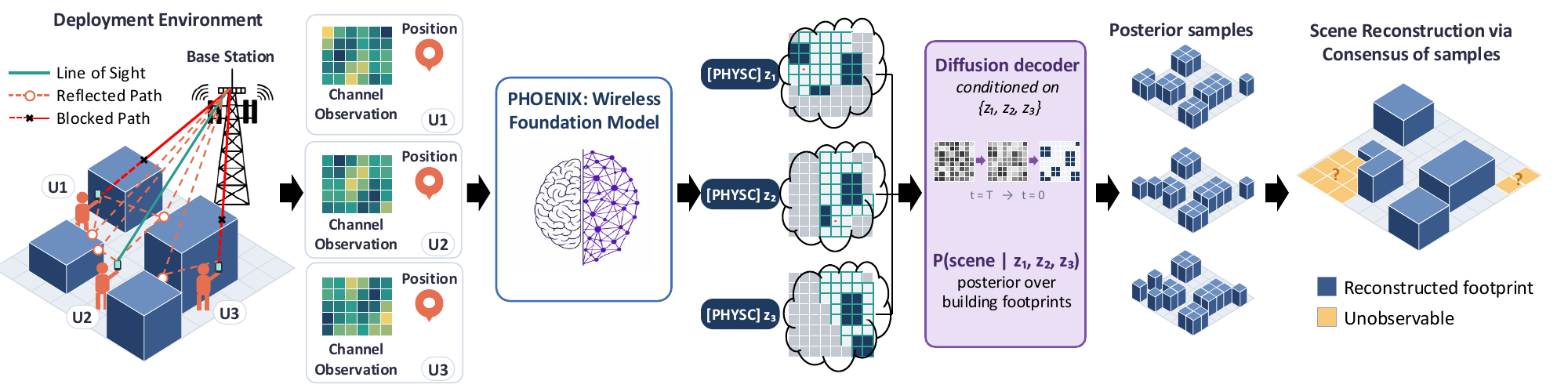}
    \vspace{-4mm}
    \caption{
       \small Overview of the proposed framework for wireless scene reconstruction using diffusion posterior inference over wireless foundation model representations.
    }
    \label{fig:overview}
    \vspace{-6mm}
\end{figure}

\vspace{-4mm}
\subsection{A probabilistic formulation}
\label{sec:probabilistic}
\vspace{-3mm}
Because the data do not determine a unique scene, we do not seek one. We take the solution object to be the posterior over scenes $p(\mathbf{s}|\mathcal{O}) \propto p(\mathcal{O}|\mathbf{s}) p(\mathbf{s})$, whose structure is dictated by the physics of wave propagation. Since structures outside $\mathcal{A}$ leave every measurement unchanged, the likelihood is invariant to them, and hence factorizing the prior gives
\vspace{-1mm}
\begin{equation} 
p(\mathbf{s} \mid \mathcal{O}) \;=\; \underbrace{p(\mathbf{s}_{\mathcal{A}} \mid \mathcal{O})}_{\text{extracted from the data}} \;\cdot\; \underbrace{p(\mathbf{s}_{\mathcal{A}^{c}} \mid \mathbf{s}_{\mathcal{A}})}_{\text{supplied by the prior}}. 
\vspace{-3mm}
\label{eq:factorization} \end{equation}

The observation set does not appear in the second factor: it acts on the observable part of the scene and leaves the rest to the prior, conditioned on what the data did determine. The per-cell marginals $\pi_i = p(s_i = 1 | \mathcal{O})$ make this explicit. For $i \in \mathcal{A}$ the measurements can drive $\pi_i$ toward $0$ or $1$ as evidence accumulates; for $i \notin \mathcal{A}$ the marginal reduces to $\pi_i = p(s_i = 1 \mid \mathbf{s}_{\mathcal{A}})$, which the measurements cannot move, and its variance converges to a prior-conditional floor rather than to zero. A deterministic regressor, which must commit a value to every cell, cannot represent this distinction and produces confident predictions on cells the measurements never constrained; a posterior can, which is what makes it the appropriate object here. Since the likelihood $p(\mathcal{O}|\mathbf{s})$ is defined only implicitly through ray tracing and is unavailable in closed form, we learn the posterior directly from paired physically consistent data $(\mathbf{s}, \mathcal{O}) \sim \mathcal{D}$, as described next.

\vspace{-4mm}
\section{Design and Methodology}
\label{sec:method}
\vspace{-3mm}
Our objective is to learn the posterior over scenes given the measurements a network already collects. The difficulty is not only that the posterior is high-dimensional, but that a model trained directly on raw channels would have to solve two problems at once. It would have to discover, from finite data, how geometry writes itself into a channel --- which structures produce which delays, which angle of arrivals, which signal blockage --- and simultaneously learn to generate a coherent scene consistent with that mapping. The first of these is the physics of propagation and is shared across every link; the second is the inference problem specific to a given set of observations. Entangling them enforces supervision on rediscovering the underlying physics, and leaves the generative model to place mass over a combinatorially large space of grids with no guidance about where the plausible scenes lie. We therefore propose a hierarchical framework, illustrated in Fig.~\ref{fig:overview}, that separates these two burdens. The first stage learns the shared physical characteristics once: a multi-modal foundation model is pretrained to aggregate channel and positional information into a representation that is explicitly aligned, during pretraining, with the scene structure that shaped the observation. Each measurement is thereby encoded into a summary of the scene properties that could have produced it. The second stage performs the inference: the encoded representations of many observations of the same environment are fused as the conditioning set of a diffusion model that learns the posterior over occupancy, and a consensus over its samples yields the reconstructed scene. Because conditioning is expressed in a representation space that is aligned with geometrical properties rather than raw channel information, the generative model just resolves which scenes are consistent with the measurements rather than first inverting propagation.

\vspace{-4mm}
\subsection{Physics-grounded representation learning}
\label{sec:fm}
\vspace{-2mm}
In this section, we discuss the design and implementation of PHOENIX (a \underline{P}hysics-grounded \underline{H}eterogeneous \underline{O}bservation \underline{E}mbedding for \underline{N}eural \underline{I}nference across wireless conte\underline{X}ts), a multi-modal foundation model that learns a unified representation of the physical information governing wireless propagation. The overall pipeline of PHOENIX is illustrated in Fig.~\ref{fig:FM}.

At its core, PHOENIX employs a single-stream multi-modal transformer-based feature embedding framework to learn global, robust representations of the wireless channel by capturing cross-modality associations among the input data covering the wireless medium, the transmitter-receiver positioning, and the spatial-temporal characteristics of the received signal. Specifically, the channel tensor in the input is converted into amplitude and phase and tokenized over antenna pairs and neighboring subcarriers; the height map is partitioned into non-overlapping patches; and the receiver location is projected to a single token. Each token additionally encodes its physical origin through antenna/subcarrier indices or normalized spatial coordinates, together with a modality embedding. A dedicated [PHYSC] token aggregates information across modalities under an asymmetric attention mask: it attends to all modality tokens but is excluded as a key from the remaining tokens. The \texttt{[PHYSC]} token is designed to aggregate information across modalities into a shared, physically grounded representation that reflects the underlying wireless propagation structure. The key question is therefore how to ensure that this representation remains consistent with the observed wireless propagation and underlying physical correspondence across modalities. Architectural and tokenization details are provided in Appendix~\ref{app:arch}.

\begin{figure}[t]
    \centering
    \includegraphics[width=0.9\linewidth]{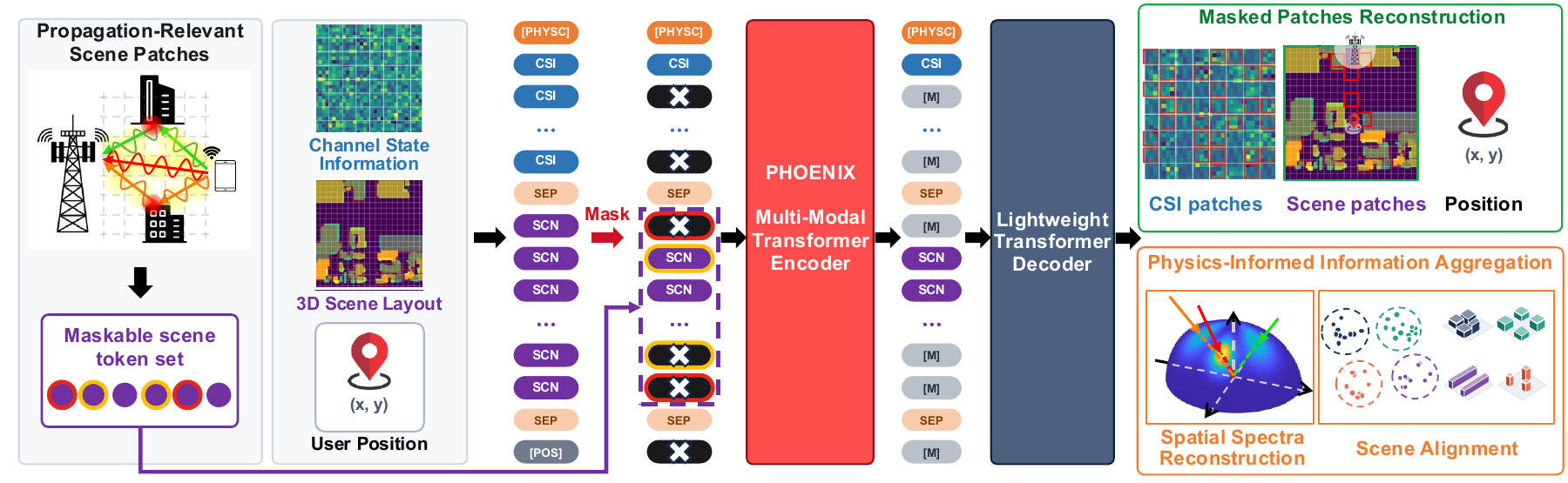}
    
    \vspace{-4mm}
    \caption{\small PHOENIX self-supervised pretraining: enabling physics-grounded representation learning over physical-layer wireless systems. 
    }
    \label{fig:FM}
    \vspace{-6mm}
\end{figure}

\textbf{Physics-guided self-supervised pretraining.} What makes the resulting representation physically meaningful is the pretraining objective. Conventional contrastive image--text style objectives~\citep{radford2021learning} are a poor fit here, as different scenes and user positions can produce similar channels through different combinations of propagation paths, so matching and non-matching channel--scene pairs are not well defined. We instead pretrain with two complementary mechanisms.

The first mechanism is \emph{bidirectional reconstruction}, which learns cross-modal physical associations by reconstructing masked information in one modality from the others: masked complex channel values are recovered from the remaining channel tokens together with scene and location information, while masked regions of the building map are recovered from the observed channel. Crucially, scene masking is not applied uniformly. Because a wireless observation constrains only the environment reached by its propagation paths, asking the model to infer regions that left no trace in the channel would train it to hallucinate unsupported geometry. We therefore identify the scene patches intersected by the collective propagation paths of a given link and mask \emph{only within} this propagation-relevant set. This encourages the model to associate channel structure with the scene geometry that actually shaped the observation. To capture dominant propagation effects, including blockage, reflection, and diffraction, without overfitting fine-grained geometry, the reconstruction target is limited to coarse occupancy at each masked location.

The second is \emph{physics-guided cross-modal distillation}, which shapes the \texttt{[PHYSC]} token. We train it to reconstruct a physical quantity jointly determined by the environment, the link geometry, and the observed channel: the \emph{spatial spectrum} which describes how signal energy is distributed over propagation directions and reveals the dominant angles of departure and arrival. Since those directions are determined by the propagation medium and the link geometry, anchoring \texttt{[PHYSC]} to them ties its representation to physically interpretable propagation characteristics that remain meaningful across environments. We further align \texttt{[PHYSC]} with the part of the scene that produced it: the encoded tokens of the propagation-relevant scene patches, the structures that block, reflect, or diffract this particular link, are pooled into a scene representation, and \texttt{[PHYSC]} is encouraged to match it, while representations of unrelated links are pushed apart, with the separation relaxed for spatially nearby links that plausibly interact with the same structures. This alignment is the mechanism on which the rest of the paper depends: it is what makes $\mathbf{z}_k$ a summary of the scene structure responsible for the observation. Further pretraining details are provided in Appendix~\ref{app:pretrain}.

After pretraining, we freeze $f_\phi$ and use the resulting \texttt{[PHYSC]} embedding $\mathbf{z}_k=f\phi(H_k,\mathbf{p}_k)$ as the compact representation of observation $k$, collecting $\mathcal{Z} = \{\mathbf{z}_k\}_{k=1}^{K}$ for each scene. 
These propagation-grounded representations serve as the basis for reconstructing the target environment.

\begin{figure}[t]

    \centering
    \includegraphics[width=0.9\linewidth]{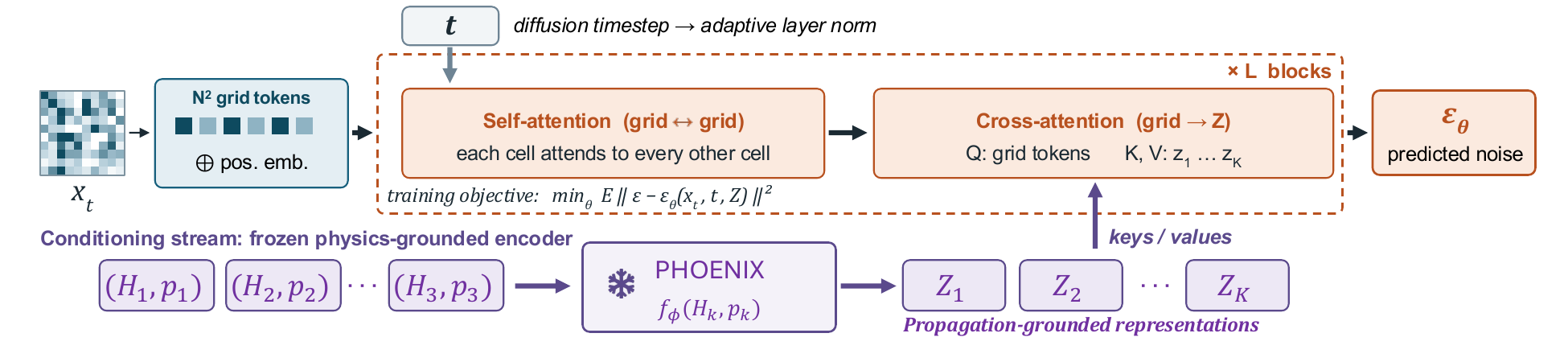}    
    \vspace{-5mm}
    \caption{
       Posterior diffusion training over physics-grounded representations from wireless foundation model allowing conditional denoising of individual cells based on observed measurements. 
    }
    \label{fig:diff}
    \vspace{-4mm}
\end{figure}

\vspace{-4mm}
\subsection{Diffusion posterior over scenes}
\label{sec:diffusion}
\vspace{-2mm}
The physics-grounded representations provide a compact description of what each wireless observation reveals about the environment. The remaining challenge is to combine these partial and potentially ambiguous views into a posterior over complete scene configurations. We therefore model $p_\theta(\mathbf{s}|\mathcal{Z})$ directly, with the second stage learning the posterior by minimizing $\mathbb{E}{(\mathbf{s},\mathcal{O})\sim\mathcal{D}}\bigl[-\log p\theta(\mathbf{s}\mid\mathcal{Z})\bigr] $.

\vspace{-3mm}
\paragraph{Parameterization.}
Although the target scene layout is expressed in a binary format, we use a continuous relaxation to enable diffusion over the occupancy grid. Specifically, we represent the scene using analog bits,
$\mathbf{x}_0 = 2\mathbf{s}-\mathbf{1}\in\{-1,+1\}^{N^2}$,
and recover the binary occupancy through thresholding,
$\hat{\mathbf{s}}=\mathbf{1}[\mathbf{x}_0>0]$.
We then define a forward diffusion process that progressively perturbs $\mathbf{x}_0$ with Gaussian noise following the standard denoising diffusion formulation~\cite{ho2020denoising}, using the cosine noise schedule $\bar{\alpha}_t$~\citep{nichol2021improved}:
\begin{equation}
q(\mathbf{x}_t\mid\mathbf{x}_0)
=
\mathcal{N}\!\left(
\mathbf{x}_t;\,
\sqrt{\bar{\alpha}_t}\mathbf{x}_0,\,
(1-\bar{\alpha}_t)\mathbf{I}
\right),
\qquad
\mathbf{x}_t
=
\sqrt{\bar{\alpha}_t}\mathbf{x}_0
+
\sqrt{1-\bar{\alpha}_t}\boldsymbol{\epsilon},
\label{eq:forwarddiff}
\end{equation}
where $\boldsymbol{\epsilon}\sim\mathcal{N}(\mathbf{0},\mathbf{I})$. This closed-form parameterization allows us to obtain a noisy scene $\mathbf{x}_t$ directly from the clean scene at any diffusion timestep.

To reverse this corruption, the denoiser $\boldsymbol{\epsilon}_\theta$ predicts the noise added to the scene while conditioning on the propagation-grounded representations $\mathcal{Z}$. It is implemented as a Transformer over $N^2$ grid tokens, one per cell, each combining the noisy occupancy value at that cell with a positional embedding of its location relative to the base station. Every block applies self-attention among the grid tokens, with the diffusion timestep injected through adaptive layer normalization with zero-initialized gates, followed by cross-attention in which the grid tokens are queries and $\mathcal{Z}$ supplies keys and values. Cross-attention suits this conditioning structure for two reasons: it is permutation invariant and accepts a variable number of observations, so one model serves any $K$ without retraining; and since each $\mathbf{z}_k$ encodes the scene structure that shaped observation $k$, cross-attention allows each grid cell to draw selectively on the observations carrying evidence about that part of the scene. Fig.~\ref{fig:diff} illustrates the overall framework, including the conditioning and denoising process.

\textbf{Training objective.}
We optimize the denoiser using the standard noise-prediction objective for diffusion models. For each training example, we sample a diffusion timestep $t$ and Gaussian noise $\boldsymbol{\epsilon}$, construct the corresponding noisy scene $\mathbf{x}_t$, and train the model to recover the injected noise:
\vspace{-2mm}
\begin{equation}
\theta^{\star}
=
\arg\min_{\theta}\;
\mathbb{E}_{(\mathbf{s},\mathcal{O})\sim\mathcal{D}}\,
\mathbb{E}_{t\sim\mathcal{U}\{1,\dots,T\},\,
\boldsymbol{\epsilon}\sim\mathcal{N}(\mathbf{0},\mathbf{I})}
\left[
\left\|
\boldsymbol{\epsilon}
-
\boldsymbol{\epsilon}_{\theta}
\left(
\mathbf{x}_t,\,
t,\,
\mathcal{Z}
\right)
\right\|_2^2
\right],
\vspace{-4mm}
\label{eq:diffloss}
\end{equation}

where $\mathbf{x}_t$ is obtained from $\mathbf{x}_0=2\mathbf{s}-\mathbf{1}$ and $\boldsymbol{\epsilon}$ through \eqref{eq:forwarddiff}, and $\mathcal{Z}=f_\phi(\mathcal{O})$ is computed by the frozen PHOENIX encoder. We update only the diffusion parameters $\theta$. Architecture details and the noise schedule are provided in Appendix~\ref{app:diffusion}.

\textbf{Readout by consensus.}
At inference, we draw $M$ independent scenes from the learned posterior by ancestral sampling, with each trajectory initialized from Gaussian noise and driven by an independent noise sequence. Each sample is then binarized to obtain an occupancy grid $\hat{\mathbf{s}}^{(m)} \sim p_\theta(\cdot\mid\mathcal{Z})$. We estimate the posterior marginal for each cell by averaging across samples and obtain the final reconstruction by thresholding this marginal:
\vspace{-1mm}
\begin{equation}
\hat{\pi}_i
=
\frac{1}{M}\sum_{m}\hat{s}_i^{(m)},
\qquad
\hat{s}_i^{\mathrm{cons}}
=
\mathbf{1}\!\left[\hat{\pi}_i>\tfrac{1}{2}\right].
\vspace{-3mm}
\label{eq:consensus}
\end{equation}

This readout follows directly from the posterior formulation. Proposition~\ref{prop:consensus} in Appendix~\ref{app:proofs} shows that $\hat{\pi}_i$ concentrates around the learned posterior marginal $\pi_i^\theta=p_\theta(s_i=1\mid\mathcal{Z})$ at a Hoeffding rate. Consequently, $\hat{s}_i^{\mathrm{cons}}$ converges to the per-cell Bayes-optimal decision under $p_\theta$, where the number of samples required for reliable recovery at a cell scales inversely with the squared posterior margin $|\pi_i^\theta-\tfrac{1}{2}|^2$. Proposition~\ref{prop:limit} shows that, when the posterior is sufficiently close to the underlying distribution, this decision recovers the true scene on the observable set $\mathcal{A}$, with additional measurements improving prediction on cells they constrain, while cells outside $\mathcal{A}$ remain governed by the conditional prior. Thus, consensus converts the learned posterior into a deterministic scene estimate.

An instructive analogy to our framework is unCLIP~\citep{ramesh2022unclip}, which also inverts a frozen representation with a diffusion decoder, conditioning on a CLIP image embedding to generate an image. Our framework follows the same high-level paradigm, but the role of sampling variability differs fundamentally. In unCLIP, the embedding specifies semantic content, while diffusion stochasticity supplies visual detail the embedding leaves undetermined, making diversity desirable. In our setting, a single physical scene underlies the observations, and variation across posterior samples reflects genuine ambiguity that the measurements leave unresolved. We therefore seek the structure consistently supported by the observations rather than diverse scenes, which is what the consensus readout in \eqref{eq:consensus} provides: it retains footprints supported across samples and suppresses structure arising from unresolved uncertainty.

\vspace{-4mm}
\section{Experiments}
\label{sec:experiment}
\vspace{-4mm}
We pretrain the foundation model with the proposed physics-guided self-supervised objective on a digital-twin corpus of 11{,}000 outdoor environments and 30 million ray-traced channel observations. The pretrained model then provides physics-grounded representations that condition the posterior diffusion model in our hierarchical pipeline, a configuration we refer to as \textbf{Diffusion + FM}.
\linebreak
To provide an interpretable alternative to the diffusion formulation and isolate the contribution of joint posterior modeling, \textbf{Marginal Prob. + FM} replaces the diffusion posterior with a deterministic per-observation decoder: each observation yields a cell-wise occupancy belief and a confidence score marking the propagation-relevant region over which that belief is meaningful. The predicted beliefs are then fused into the final reconstruction. 
To our knowledge, prior work does not directly address scene reconstruction from \textit{communication-infrastructure data} and without geometric priors, precluding a direct head-to-head comparison; we therefore use controlled baselines to isolate each component of our design.
\textbf{Diffusion + Raw Meas.} conditions the identical diffusion model on raw observations and user locations, testing whether the scene posterior can be learned without the intermediate scene-aligned representation; this is closest in spirit to~\citep{hehn2025wigatr}, which reconstructs indoor geometry from a single link's received power with the assumption of knowing floor and ceiling geometry. \textbf{Marginal Prob. + Raw Meas.} predicts occupancy over the full grid from each measurement under a binary cross-entropy objective and aggregates the per-measurement predictions, without modeling the observation-specific propagation-relevant region. Training objectives and implementation details for all models are given in Appendices~\ref{app:setup}--\ref{app:baselines}.
\linebreak
All models are evaluated on $\sim$700 environments held out from every stage of training, spanning layouts from dense urban blocks to open areas with isolated structures. Each observation consists of the full wireless channel collected by a single base station together with the user's position. We report balanced accuracy, F1, and intersection-over-union (IoU) of the reconstructed occupancy grid against ground truth. Appendix~\ref{app:setup} details the array and waveform configuration, Appendix~\ref{app:metrics} defines the metrics, and Appendices~\ref{app:position}--\ref{app:noisy_pos} study the effect of incomplete/noisy position information.

 \vspace{-3mm}
\subsection{Main Results}
\vspace{-3mm}

\begin{figure}[b]
    \vspace{-5mm}
    \centering
    \includegraphics[width=1\linewidth]{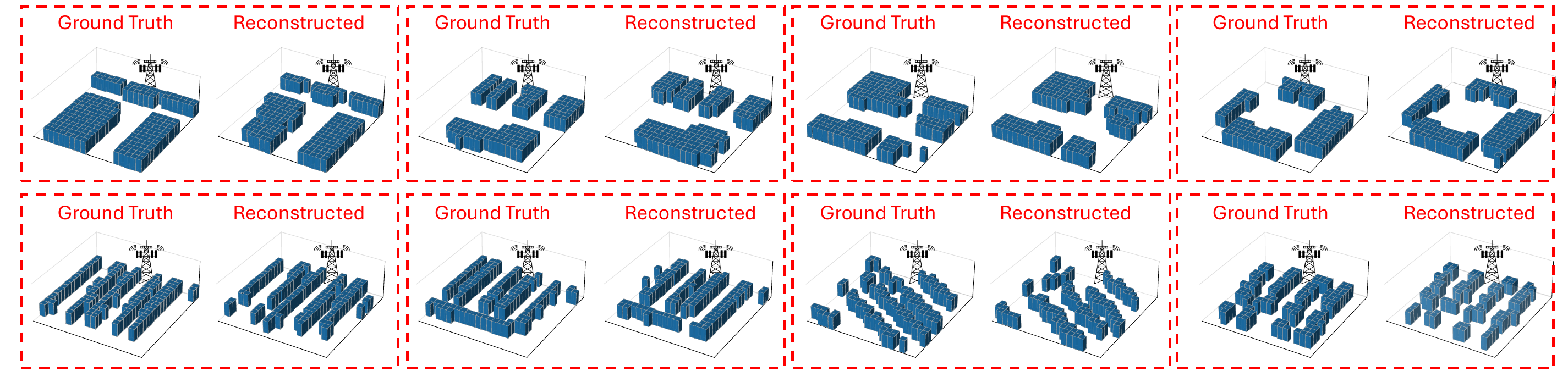}    
    \vspace{-8mm}
    \caption{\small Scene reconstruction from wireless measurements across spatially distinct environments.}
    \label{fig:results_illustration}

\end{figure}

Figure~\ref{fig:results_illustration} presents scene reconstruction produced by our framework on previously unseen representative environments. Given 500 wireless channel measurements from a single base-station node, our framework recovers the surrounding building footprints, their location, extent, and arrangement, from the wireless channel observations and user position, without any prior over scene information. The reconstructions closely follow ground-truth layouts across environments with substantially different spatial density and structure. \textit{To our knowledge, this is the first demonstration that communication-infrastructure channel measurements can support reconstruction of arbitrary outdoor scene geometry without assumptions on the number, shape, or placement of structures.} By performing posterior inference over physics-grounded representations, our framework makes this ill-posed inverse problem tractable and allows existing radio links to act as environmental sensors. The scope of what is recovered is set by the measurements: with a ULA at the base station, the channel resolves paths only in azimuth, so building height is severely underspecified; hence we reconstruct the scene as a two-dimensional occupancy grid of building footprints. A three-dimensional extension using a uniform rectangular array that can resolve paths in elevation is left for future work.

Table~\ref{tab:scene_reconstruction} summarizes the reconstruction performance. Across all metrics, our framework achieves the strongest performance, substantially outperforming both raw-measurement baselines and improving on the marginal-probability variant. To the best of our knowledge, these are the strongest scene reconstruction results reported from communication-infrastructure measurements, and substantially improve on prior attempts at this problem.
Several conclusions follow from the results.

\begin{table}[t]
\centering
\caption{Scene reconstruction performance across different models.}
\label{tab:scene_reconstruction}
\resizebox{\columnwidth}{!}{%
\begin{tabular}{lccc|ccc|ccc}
\toprule
& \multicolumn{3}{c|}{Balanced Accuracy $\uparrow$}
& \multicolumn{3}{c|}{F1 $\uparrow$}
& \multicolumn{3}{c}{IoU $\uparrow$} \\
\cmidrule(lr){2-4}
\cmidrule(lr){5-7}
\cmidrule(lr){8-10}
Model & Mean & Median & P90 & Mean & Median & P90 & Mean & Median & P90 \\
\midrule
Diffusion + Raw Meas.
& 0.505 & 0.500 & 0.519
& 0.054 & 0.016 & 0.162
& 0.030 & 0.008 & 0.088 \\

Marginal Prob. + Raw Meas.
& 0.621 & 0.618 & 0.668
& 0.397 & 0.398 & 0.503
& 0.251 & 0.248 & 0.336 \\

Marginal Prob. + FM (Ours)
& 0.738 & 0.739 & 0.803
& 0.633 & 0.642 & 0.737
& 0.470 & 0.473 & 0.584 \\

Diffusion + FM (Ours)
& \textbf{0.758} & \textbf{0.746} & \textbf{0.868}
& \textbf{0.663} & \textbf{0.655} & \textbf{0.810}
& \textbf{0.506} & \textbf{0.487} & \textbf{0.680} \\
\bottomrule
\end{tabular}%
}
\vspace{-7mm}
\end{table}

The comparison with \textit{Diffusion + Raw Meas.} isolates the benefit of the physics-grounded representation. The decoder, noise schedule, and sampler are otherwise identical, yet conditioning on raw channels collapses to near-chance performance. This indicates a failure of learnability rather than information: raw channels contain at least as much scene information as their learned representations, but the decoder must simultaneously discover the propagation-to-geometry mapping and perform posterior generation from a high-dimensional input. In contrast, the foundation model provides scene-aligned representations that substantially simplify posterior inference. 

\textit{Marginal Prob. + Raw Meas.} fails for a different reason. Its full-grid cross-entropy requires every observation to predict every scene cell, even though a measurement is informative only about propagation-relevant regions. The resulting overconfident predictions in uninformed regions accumulate during fusion, where observations that never sensed a cell still contribute votes to it. \textit{Marginal Prob. + FM} uses the same fusion but forms each per-observation belief on the physics-grounded representation and around the propagation-relevant cells of that observation, abstaining elsewhere; each vote is therefore informative where it is cast, and the aggregation better recovers the scene. Our proposed design improves over this baseline by \textbf{22.1\%} in balanced accuracy and \textbf{67.0\%} in F1 score.

Lastly, the results demonstrate that the joint posterior outperforms fused marginals in this setting. The main advantage is concentrated in the upper tail of the per-scene distribution: the median IoU of the two models differs by three percent relative, while at the 90th percentile the gap widens to sixteen percent. The marginal variant treats observations independently and combines them through a fixed rule: confidence gating discards evidence that is weak in each observation but informative in combination, and nothing in the fusion enforces that the result be a plausible scene. The diffusion decoder instead learns the joint posterior over the entire grid given the full set of observations. It fuses evidence in representation space, and it completes the scene through the learned prior where the measurements are silent, the second factor in \eqref{eq:factorization}. Additional illustrative scene reconstruction examples comparing all four methods across diverse environments are provided in Appendix~\ref{app:visual_scene_recons}.


\vspace{-4mm}
\subsection{Physical Interpretability of the Learned Beliefs}
\vspace{-3mm}

\begin{wrapfigure}{r}{0.55\textwidth}
\centering
\vspace{-4mm}
\includegraphics[width=\linewidth]{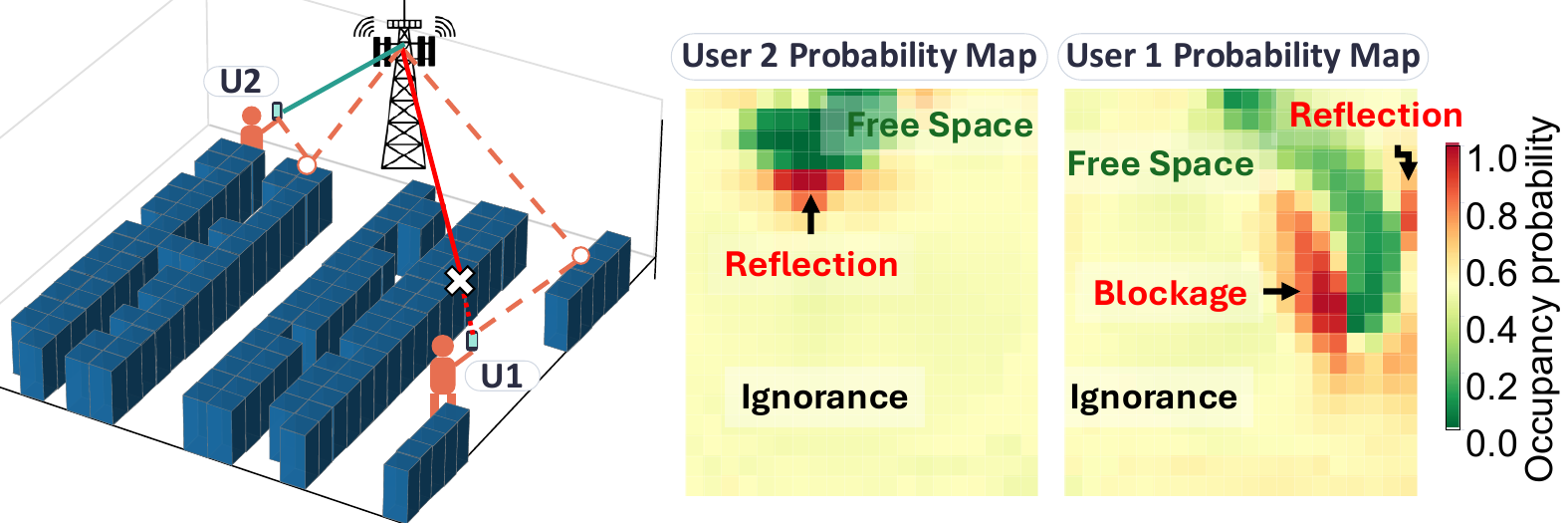}
\vspace{-8mm}
\caption{\small Marginal occupancy probability estimated from wireless measurements.}
\label{fig:interpretability}
\vspace{-4mm}
\end{wrapfigure}
We examine the physical interpretability of the per-observation marginal predictions using \textit{Marginal Prob. + FM}, which exposes a belief for each measurement individually. Specifically, we ask whether each belief captures the environment regions implicated by its propagation paths while remaining uncertain elsewhere. Figure~\ref{fig:interpretability} visualizes the marginal occupancy probability for two measurements in the same environment, together with the ground-truth layout and corresponding propagation paths. The belief recovers the free-space corridors along the direct and reflected paths, the reflecting structures, and the obstructing component when the direct path is blocked. Elsewhere, predictions remain near one-half, indicating uncertainty in regions not informed by the measurement.
\linebreak
What the beliefs recover is therefore not an arbitrary partial reconstruction but the specific geometry
that is physically responsible for shaping the observation, leaving unrelated regions unknown. This behavior, induced by the physics-guided pretraining, makes the marginal predictions physically interpretable: each belief can be read back as a statement about which structures the link encountered.

\vspace{-2mm}
\subsection{Performance Across Scene Density}
\vspace{-2mm}
Scene density governs the complexity of the propagation conditions and with it the difficulty of the inverse problem: in sparse environments most links propagate through free space, and each measurement admits a narrow family of consistent scenes, whereas blockage, reflection, and higher-order interaction widen that and force the posterior to reconcile many such families at once. We therefore partition the test environments by occupied fraction, spanning 1--70\% across test set, into three equal-width bands: sparse, moderate, and dense, and report performance within each in Table~\ref{tab:density}.

\begin{table}[h]
\centering
\vspace{-6mm}
\caption{Mean reconstruction performance by scene density.}
\label{tab:density}
\resizebox{\columnwidth}{!}{%
\begin{tabular}{lccc|ccc|ccc}
\toprule
& \multicolumn{3}{c|}{Sparse (88 scenes)}
& \multicolumn{3}{c|}{Moderate (565 scenes)}
& \multicolumn{3}{c}{Dense (18 scenes)} \\
\cmidrule(lr){2-4} \cmidrule(lr){5-7} \cmidrule(lr){8-10}
Model & Bal. Acc. $\uparrow$ & F1 $\uparrow$ & IoU $\uparrow$
      & Bal. Acc. $\uparrow$ & F1 $\uparrow$ & IoU $\uparrow$
      & Bal. Acc. $\uparrow$ & F1 $\uparrow$ & IoU $\uparrow$ \\
\midrule
Diffusion + Raw Meas.
& 0.518 & 0.066 & 0.039
& 0.503 & 0.048 & 0.027
& 0.508 & 0.179 & 0.105 \\
Marginal Prob. + Raw Meas.
& 0.647 & 0.434 & 0.283
& 0.617 & 0.393 & 0.248
& 0.589 & 0.316 & 0.190 \\
Marginal Prob. + FM (Ours)
& 0.764 & 0.631 & 0.468
& 0.735 & 0.635 & 0.471
& 0.712 & 0.607 & 0.457 \\
Diffusion + FM (Ours)
& \textbf{0.786} & \textbf{0.665} & \textbf{0.510}
& \textbf{0.752} & \textbf{0.659} & \textbf{0.501}
& \textbf{0.806} & \textbf{0.775} & \textbf{0.652} \\
\bottomrule
\end{tabular}%
}
\vspace{-2mm}
\end{table}

\textit{Diffusion + FM} is best in every regime, and its advantage is most pronounced for dense scenes. Both marginal estimators degrade monotonically in balanced accuracy as density increases, whereas \textit{Diffusion + FM} remains robust across the three bands; relative to \textit{Marginal Prob. + FM}, its balanced-accuracy advantage grows from 2.9\% to 13.2\% and its IoU advantage from 9.0\% to 42.7\%. This is consistent with how the two combine evidence: denser environments leave each observation informative about a smaller portion of the scene, and while the marginal formulation must aggregate independently estimated beliefs under this incomplete evidence, the diffusion model infers the scene jointly and completes regions where direct evidence is limited through the learned prior. 

\vspace{-4mm}
\subsection{Effect of the Number of Wireless Measurements}
\label{sec:sweep}
\vspace{-2mm}

Figure~\ref{fig:acc_vs_measurement} reports reconstruction quality for 50 to 1,000 measurements per scene, showing how
additional observations provide complementary information. Since both diffusion decoders are trained with {\small $K \in [100, 500]$}, the extreme sweep points also test generalization in the number of observations.

\begin{wrapfigure}{r}{0.43\textwidth}
\vspace{-2mm}
    \centering
    \includegraphics[width=\linewidth]{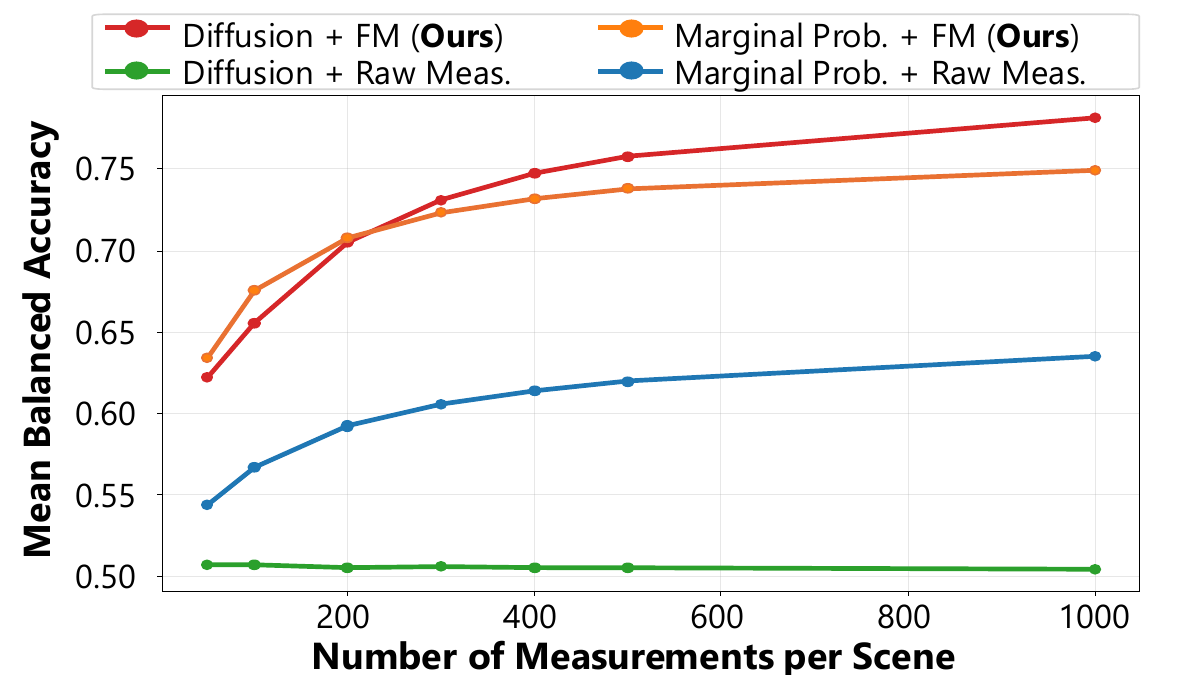}
    \vspace{-7mm}
    \caption{\small Reconstruction accuracy as a function of collected measurements per scene.}
    \label{fig:acc_vs_measurement}
    \vspace{-4mm}
\end{wrapfigure}

\textit{Diffusion + FM} and the marginal-probability variants improve with more measurements before saturating. Additional observations help in two ways: new links, particularly new reflected paths, reach cells no earlier observation touched and enlarge the observable set, while additional coverage of observed cells sharpens the posterior there, as described by Proposition~\ref{prop:limit}. Saturation follows from the same analysis, since once the users of a single base station have illuminated most of the scene its links can reach, further observations revisit covered cells and the remaining errors concentrate on structures no link reaches. \textit{Diffusion + Raw Meas.} instead stays near chance throughout: without a representation exposing the scene-relevant structure of the observations, more raw data does not become usable evidence.

The two foundation-model approaches scale differently. \textit{Marginal Prob. + FM} is comparable to \textit{Diffusion + FM} under few or moderately many observations, as both rely on the same representations under similarly incomplete evidence, but is increasingly surpassed as measurements accumulate. The value of modeling the joint posterior thus emerges once enough complementary observations are available to integrate, reflecting the complementary roles of our two components.

\vspace{-4mm}
\section{Conclusion}
\vspace{-4mm}
We presented a hierarchical framework for reconstructing environmental geometry from wireless channel measurements by combining physics-grounded representation learning with joint generative posterior inference. A wireless foundation model first extracts scene-relevant information from individual observations, while a diffusion model jointly integrates these representations to infer the posterior over scene occupancy, from which a per-cell consensus over sampled scenes recovers the building footprints. Results demonstrate our framework can recover previously unseen scene structures from communication signals, and controlled comparisons show the importance of our hierarchical design for effectively solving the underlying ill-posed inverse problem. The results highlight the potential of existing wireless infrastructure to serve for environmental perception.



\subsection*{AI use statement}

We used generative AI tools as a general-purpose writing and coding assistant. Specifically, they were used to polish the phrasing of text drafted by the authors, and to assist with the implementation of experiment and figure-generation scripts. We did not use generative AI tools for research ideation, for deriving the theoretical results, or for retrieving or summarizing related work. All AI-assisted output was reviewed by the authors: text was checked for accuracy against the underlying results, all AI-assisted code was read, executed, and verified against the reported numbers, and the theoretical claims, their assumptions, and their proofs were derived and checked by the authors. We take responsibility for the final content of this work, including all text, claims, and artifacts produced with the aid of generative AI.

\subsection*{Reproducibility statement}

We have aimed to make every component of this work reproducible. The problem formulation, the assumptions underlying each theoretical claim, and complete proofs of propositions are given in Appendices~\ref{app:illposed} and~\ref{app:proofs}. Dataset construction, including the simulation configuration, the derivation of occupancy grids, and the extraction of propagation-relevant sets, is described in Appendices~\ref{app:forward} and~\ref{app:setup}. Architecture and tokenization details for the foundation model are given in Appendix~\ref{app:arch}, and the complete pretraining objective in Appendix~\ref{app:pretrain}. The diffusion decoder, its noise schedule, and its sampler are specified in Appendix~\ref{app:diffusion}, and full training configurations --- optimizer, learning rate, schedule, batch size, and hardware --- for the proposed pipeline, the ablated variant, and both baselines are given in Appendices~\ref{app:setup}, \ref{app:marginal}, and~\ref{app:baselines}. Evaluation metrics are defined in Appendix~\ref{app:metrics}, and the evaluation protocol, including how observation subsets are drawn across the measurement sweep, is described in Appendix~\ref{app:setup}. Source code for training and evaluation, together with the pretrained dataset and checkpoints, will be provided upon publication. 



\bibliography{iclr2027_conference}
\bibliographystyle{iclr2027_conference}

\appendix
\section{Forward model details}
\label{app:forward}

\textbf{Array responses.} For a uniform linear array of $N$ elements with spacing $d$ at wavelength $\lambda$, the array response at angle $\vartheta$ is
\begin{equation} a(\vartheta) \;=\; \Bigl[\,1,\; e^{-j\frac{2\pi d}{\lambda}\sin\vartheta},\;\dots,\; e^{-j\frac{2\pi d (N-1)}{\lambda}\sin\vartheta}\,\Bigr]^{\!\top}. \label{eq:arrayresponse} \end{equation}
Substituting \eqref{eq:arrayresponse} into the path sum of the main text gives the channel entry for each transmit--receive pair and subcarrier. The angular resolution used below follows from the Rayleigh criterion for this aperture, $\Delta\theta \approx 2/N_{tx}$ radians near broadside, and the delay resolution is $\Delta\tau = 1/B$ for occupied bandwidth $B$.

\textbf{Path generation.} The path set $\mathcal{P}(\mathbf{s},\mathbf{p})$ is produced by ray tracing the scene geometry for the transmitter--receiver pair. Each path is characterized by its interaction sequence with the structures of the scene: the direct path when it exists, first- and higher-order specular reflections off building facades, and diffractions at building edges. The gain $\alpha_\ell$ aggregates free-space path loss with the reflection and diffraction coefficients along the interaction sequence.

\textbf{Propagation-relevant set.} For observation $k$ we define
\begin{equation} a_k \;=\; \bigl\{\, i \;:\; \mathrm{cell}_i \cap \ell \neq \emptyset \ \ \text{for some } \ell \in \mathcal{P}(\mathbf{s},\mathbf{p}_k) \,\bigr\}, \qquad A \;=\; \bigcup_{k=1}^{K} a_k, \label{eq:relevant} \end{equation}
recording, for every path, the grid cells traversed by its segments together with the cells at which it interacts with a structure. We additionally record the first-order neighborhood $\partial a_k$ of these cells, used to define the boundary zone in the analysis. 

\section{Ill-posedness of the inverse problem}
\label{app:illposed}

This appendix formalizes the three sources of non-uniqueness described in the main text.

\subsection{Exact invariance outside the propagation-relevant set}

By the definition \eqref{eq:relevant}, cells outside $a_k$ do not participate in any path of observation $k$, so the likelihood of that observation does not depend on them; aggregating over the observation set gives the invariance stated in the main text. Its consequence for the number of scenes consistent with a measurement is the following.

\begin{proposition}[Size of the consistent family] \label{prop:count} Fix an observation $k$ and let $\mathbf{s}^{\star}$ be the true scene. Let $\mathcal{S}_k(H_k) = \{\mathbf{s} : \mathcal{F}(\mathbf{s},\mathbf{p}_k) = H_k\}$ be the set of scenes reproducing the noiseless measurement exactly. Then $|\mathcal{S}_k(H_k)| \geq 2^{|a_k^{c}|}$. \end{proposition}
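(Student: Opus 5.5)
The plan is to build an explicit injection from $\{0,1\}^{a_k^{c}}$ into $\mathcal{S}_k(H_k)$. Since the cells outside $a_k$ are exactly the coordinates the measurement cannot see, flipping them should give a family of scenes of the right size.

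For each $\mathbf{u}\in\{0,1\}^{a_k^c}$, define a scene $\mathbf{s}^{(\mathbf{u})}$ by $s^{(\mathbf{u})}_i=s^\star_i$ for $i\in a_k$ and $s^{(\mathbf{u})}_i=u_i$ for $i\in a_k^c$. These $2^{|a_k^c|}$ scenes are pairwise distinct, because they differ on $a_k^c$. It therefore suffices to show $\mathcal{F}(\mathbf{s}^{(\mathbf{u})},\mathbf{p}_k)=H_k=\mathcal{F}(\mathbf{s}^\star,\mathbf{p}_k)$ for every $\mathbf{u}$. By \eqref{eq:pathsum}, $\mathcal{F}$ depends on the scene only through the path set $\mathcal{P}(\cdot,\mathbf{p}_k)$, so I would show $\mathcal{P}(\mathbf{s}^{(\mathbf{u})},\mathbf{p}_k)=\mathcal{P}(\mathbf{s}^\star,\mathbf{p}_k)$ with identical parameters $(\alpha_\ell,\tau_\ell,\theta_\ell,\phi_\ell)$.

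This path-set invariance is the crux, and I expect it to be the main obstacle, because the definition \eqref{eq:relevant} only records cells touched by paths that \emph{exist} in $\mathbf{s}^\star$. The argument splits into two inclusions.

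\textbf{No existing path is lost or altered.} Each path $\ell\in\mathcal{P}(\mathbf{s}^\star,\mathbf{p}_k)$ has its segments and interaction points inside cells of $a_k$. Its validity (free traversal of its segments, presence of the reflecting or diffracting structure) and its parameters (segment lengths, facet orientations, coefficients) are determined by the occupancy on those cells, which is unchanged. Hence $\ell$ persists with the same parameters.

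\textbf{No new path is created.} Filling cells in $a_k^c$ could in principle add a reflector that creates a new path. To exclude this, I would use the main text's interpretation that structures outside the propagation-relevant set ``leave the observation exactly unchanged,'' which is already stated as the invariance of the likelihood. I would make this an explicit modelling assumption on the ray tracer: the ray tracer explores the candidate rays it launches, and any candidate it launches passes only through cells of $a_k$. Equivalently, one can define $a_k$ to include every cell queried by the ray tracer, including the boundary zone $\partial a_k$ where needed.

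Under this assumption the path sets agree, so $\mathbf{s}^{(\mathbf{u})}\in\mathcal{S}_k(H_k)$ for every $\mathbf{u}$. Counting the distinct scenes gives $|\mathcal{S}_k(H_k)|\ge 2^{|a_k^c|}$. The inequality can be strict, because the resolution and role degeneracies discussed in Section~\ref{sec:inverse} may add further consistent scenes that differ inside $a_k$.
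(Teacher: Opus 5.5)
Your construction and count are the paper's proof: it flips an arbitrary subset $\mathcal{J}\subseteq a_k^{c}$ of $\mathbf{s}^{\star}$, argues the path sum is unchanged, and counts $2^{|a_k^{c}|}$ distinct scenes. The paper argues only that the parameters of the existing paths do not depend on $\mathbf{s}_{a_k^{c}}$ and tacitly assumes no new path appears, so your explicit no-new-path assumption fills a step it leaves implicit; state that assumption directly as ``$\mathcal{F}(\mathbf{s},\mathbf{p}_k)=\mathcal{F}(\mathbf{s}^{\star},\mathbf{p}_k)$ whenever $\mathbf{s}_{a_k}=\mathbf{s}^{\star}_{a_k}$'' (the single-link form of (A2)) rather than through which rays the tracer launches, because a shoot-and-bounce tracer launches rays in all directions and enlarging $a_k$ to every queried cell would weaken the bound.
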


\begin{proof} By the definition of $a_k$, no cell in $a_k^{c}$ participates in any path of $\mathcal{P}(\mathbf{s}^{\star},\mathbf{p}_k)$; the path parameters $(\alpha_\ell,\tau_\ell,\theta_\ell,\phi_\ell)$ therefore do not depend on $\mathbf{s}_{a_k^{c}}$, and neither does the path sum. Hence for any subset $\mathcal{J} \subseteq a_k^{c}$ the scene $\mathbf{s}^{\star} \oplus \mathbf{e}_{\mathcal{J}}$, obtained by flipping the cells in $\mathcal{J}$, satisfies $\mathcal{F}(\mathbf{s}^{\star}\oplus \mathbf{e}_{\mathcal{J}}, \mathbf{p}_k) = H_k$ and lies in $\mathcal{S}_k(H_k)$. Distinct subsets give distinct scenes, and there are $2^{|a_k^{c}|}$ of them. \end{proof}

The inequality is strict in general, since the count enumerates only scenes differing from $\mathbf{s}^{\star}$ outside $a_k$; the ambiguities of Sections~\ref{app:resolution} and \ref{app:role} contribute further members that differ \emph{inside} $a_k$. Aggregating over the observation set replaces $a_k^{c}$ by $A^{c}$, giving $|\mathcal{S}(\mathcal{O})| \geq 2^{|A^{c}|}$: the residual non-uniqueness is exponential in the number of cells no path touches.

\subsection{Finite resolution}
\label{app:resolution}

Within $A$ the map from geometry to channel remains many-to-one. A measurement of bandwidth $B$ resolves path delays only to $\Delta\tau = 1/B$, and an aperture of $N_{tx}$ elements resolves angles only to $\Delta\theta \approx 2/N_{tx}$. If two scenes induce path parameter sets that agree within these limits,
\begin{equation} \bigl\{(\tau_\ell,\theta_\ell,\phi_\ell,\alpha_\ell)\bigr\}_{\mathbf{s}} \;\equiv\; \bigl\{(\tau_\ell,\theta_\ell,\phi_\ell,\alpha_\ell)\bigr\}_{\mathbf{s}'} \quad \text{within } (\Delta\tau, \Delta\theta), \label{eq:resolution} \end{equation}
then $\mathcal{F}(\mathbf{s},\mathbf{p}_k) \approx \mathcal{F}(\mathbf{s}',\mathbf{p}_k)$ and the two scenes cannot be told apart from observation $k$. This source of ambiguity shrinks as bandwidth and aperture grow.

\subsection{Propagation-Equivalent Scenes}
\label{app:role}

By the path-sum forward model, the channel of link $k$ depends on the scene only through the path set it induces, $\mathcal{P}(\mathbf{s},\mathbf{p}_k)$, and its parameters. Two scenes that induce the same path parameters for every link are therefore indistinguishable from $\mathcal{O}$, regardless of how they differ as occupancy grids. We call such scenes \emph{propagation-equivalent}.

\begin{proposition}[Propagation-equivalent scenes are indistinguishable] \label{prop:role} If two scenes $\mathbf{s},\mathbf{s}'$ satisfy $\mathcal{P}(\mathbf{s},\mathbf{p}_k) = \mathcal{P}(\mathbf{s}',\mathbf{p}_k)$ for a given observation, in the sense that the two path sets carry identical parameters $(\alpha_\ell,\tau_\ell,\theta_\ell,\phi_\ell)$, then $\mathcal{F}(\mathbf{s},\mathbf{p}_k)=\mathcal{F}(\mathbf{s}',\mathbf{p}_k)$, no estimator can distinguish them from $\mathcal{O}$, irrespective of bandwidth and aperture. \end{proposition}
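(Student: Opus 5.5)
The plan is to argue directly from the path-sum forward model \eqref{eq:pathsum}, in the same style as the proof of Proposition~\ref{prop:count}. The key observation is that $\mathcal{F}(\cdot,\mathbf{p}_k)$ depends on the scene only through the path parameters. Once those are shown to be equal, identical channels follow. The indistinguishability statement then follows by a likelihood argument.

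First, I will write $H=\mathcal{F}(\mathbf{s},\mathbf{p}_k)$ entrywise as
\[
H[t,r,f]=\sum_{\ell}\alpha_\ell\, a_r(\theta_\ell)\, a_t^{*}(\phi_\ell)\, e^{-j2\pi f\tau_\ell}.
\]
The factors $a_r$, $a_t$ (given by \eqref{eq:arrayresponse}) and the subcarrier grid are fixed by the hardware and do not depend on the scene. The summand is therefore a fixed function $g_{t,r,f}(\alpha,\tau,\theta,\phi)$. The channel is the sum of $g$ over the multiset $\mathcal{P}(\mathbf{s},\mathbf{p}_k)$.

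If $\mathcal{P}(\mathbf{s},\mathbf{p}_k)=\mathcal{P}(\mathbf{s}',\mathbf{p}_k)$ as parameter multisets, there is a bijection between the paths that preserves $(\alpha_\ell,\tau_\ell,\theta_\ell,\phi_\ell)$. Because addition is commutative, the two sums agree term by term for every $(t,r,f)$. Hence $\mathcal{F}(\mathbf{s},\mathbf{p}_k)=\mathcal{F}(\mathbf{s}',\mathbf{p}_k)$.

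Second, for indistinguishability, I will model the observation as $H_k=\mathcal{F}(\mathbf{s},\mathbf{p}_k)+\text{noise}$, with noise whose law does not depend on the scene, or with no noise at all. The law of $(H_k,\mathbf{p}_k)$ given $\mathbf{s}$ then depends on $\mathbf{s}$ only through $\mathcal{F}(\mathbf{s},\mathbf{p}_k)$. So $p(\mathcal{O}\mid\mathbf{s})=p(\mathcal{O}\mid\mathbf{s}')$, and if the equality of path sets holds for every $k$, this extends to the whole observation set by the product over links.

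Any estimator $\hat{\mathbf{s}}(\mathcal{O})$ therefore has the same output distribution under $\mathbf{s}$ and $\mathbf{s}'$. Consequently no test separates them with error below $1/2$: the total variation distance between the two observation laws is zero. I will also note that the posterior ratio equals the prior ratio, $p(\mathbf{s}\mid\mathcal{O})/p(\mathbf{s}'\mid\mathcal{O})=p(\mathbf{s})/p(\mathbf{s}')$, so the data cannot reweight the two scenes.

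Third, for "irrespective of bandwidth and aperture", I will observe that the argument never used $B$ or $N_{tx}$. The equality holds exactly for any subcarrier set and any array response. This contrasts with \eqref{eq:resolution}, where the equality is only approximate and depends on $(\Delta\tau,\Delta\theta)$.

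The main obstacle is definitional rather than technical. I must make precise that "same path set" means equality as multisets of parameter tuples, so that paths may be matched up to permutation and multiplicity. I must also make precise that the noise and measurement model are scene-independent given $\mathcal{F}$. I will state both as standing assumptions, and the rest is immediate.
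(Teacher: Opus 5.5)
Your proposal is correct and follows essentially the same route as the paper: each channel entry is a sum of terms that depend only on the path parameters $(\alpha_\ell,\tau_\ell,\theta_\ell,\phi_\ell)$, so equal parameterized path sets give equal path sums and hence equal likelihoods. Your additions only make explicit what the paper's one-line argument leaves implicit: matching paths as multisets, stating a scene-independent noise model, and requiring the equality for every $k$ before claiming indistinguishability from all of $\mathcal{O}$.
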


The channel entry of link $k$ is a sum over $\ell \in \mathcal{P}(\mathbf{s},\mathbf{p}_k)$ of terms depending only on $(\alpha_\ell,\tau_\ell,\theta_\ell,\phi_\ell)$. Equality of the parameterized path sets therefore implies equality of the path sums and of the likelihoods $p(H_k\mid\cdot)$ for that observation.

The content of this statement lies in how large the set of propagation-equivalent scenes is. The map from occupancy to path parameters is many-to-one, and it is so for reasons that are geometric rather than instrumental. A cell influences a link only through the \emph{role} it plays for that link,
\begin{equation} \rho_k(i) = \rho\bigl(i,\mathbf{p}_k,\mathbf{s}\bigr) \in \{\texttt{block},\ \texttt{reflect},\ \texttt{free},\ \texttt{none}\}, \label{eq:roledef} \end{equation}
where $\texttt{block}$ denotes an occupied cell that removes a path that would otherwise reach the receiver, $\texttt{reflect}$ an occupied cell that supports a specular or diffracted interaction contributing a path, $\texttt{free}$ an unoccupied cell traversed by a contributing path, and $\texttt{none}$ a cell that neither carries nor obstructs any contributing path. Additionally, $\rho_k(i)$ is a function of the link geometry, not of $s_i$ alone: different occupancy value can provide the same role for a given link.

\emph{Aliasing of interaction sequences.} Distinct interaction sequences can produce the same path parameters. A single specular reflection from one surface and consecutive reflections from two others can yield the same total delay and the same departure and arrival angles, in which case the two scenes contribute identical terms to the path sum. Here the roles themselves differ --- different cells reflect --- yet the scenes remain propagation-equivalent. This mechanism is exact when the geometry makes the parameters coincide, as symmetric layouts often do, and approximate otherwise, in which case it compounds with the finite-resolution ambiguity of Appendix~\ref{app:resolution}.

\emph{Occupancy does not imply blockage.} A cell may be occupied and still fail to obstruct a link whose line-of-sight segment passes over it. The base station and the user sit at different heights, so the direct ray clears any structure shorter than its elevation at that point, and a two-dimensional occupancy grid records only that a building is present, not whether it is tall enough to intercept the ray. A cell can therefore belong to the propagation-relevant set of an observation --- it lies along a path the link uses --- while the measurement still carries no evidence against its being occupied: the unobstructed channel is equally consistent with free space and with a structure the ray passes above.

\subsection{Collective observations}

Under conditional independence of the measurements given the scene,
\begin{equation} p(\mathbf{s} \mid \mathcal{O}) \;\propto\; p(\mathbf{s}) \prod_{k=1}^{K} p\bigl(H_k \mid \mathbf{s}\bigr), \label{eq:collective} \end{equation}
so additional observations intersect the families $\mathcal{S}_k$ and sharpen the posterior. The intersection does not collapse to a point: by Proposition~\ref{prop:count} the residual family has size at least $2^{|A^{c}|}$, and within $A$ the ambiguities of Sections~\ref{app:resolution} and \ref{app:role} persist. Substituting the invariance into Bayes' rule gives the factorization stated in the main text, whose second factor is independent of $\mathcal{O}$; consequently a cell $i \notin A$ has $\pi_i = p(s_i = 1 \mid \mathbf{s}_{A})$, and as the data sharpen $p(\mathbf{s}_A\mid\mathcal{O})$ its variance converges to the prior-conditional variance rather than to zero.

\section{Foundation model architecture and tokenization}
\label{app:arch}

\textbf{Channel tokenization.} The CSI tensor is converted from real and imaginary components to amplitude and phase $(a_{t,r,s}, \phi_{t,r,s})$. For each transmit antenna $t$, receive antenna $r$, and group $g$ of $n_{sc}$ neighboring subcarriers, one channel token is formed by concatenating the corresponding amplitude and phase values, $\mathbf{x}^{ch}_{t,r,g} = [\,a_{t,r,g,1:n_{sc}} \,\|\, \phi_{t,r,g,1:n_{sc}}\,] \in \mathbb{R}^{2n_{sc}}$, giving $N_{ch} = N_{tx}N_{rx}N_{sc}/n_{sc}$ tokens. Each is mapped to the shared embedding dimension $d$ by a learnable linear projection followed by layer normalization. This preserves full transmit- and receive-antenna resolution, since each antenna pair carries distinct spatial information, while grouping subcarriers to represent local frequency structure compactly.

\textbf{Scene and location tokenization.} The height map $\mathbf{S} \in \mathbb{R}^{N\times N}$ is partitioned into non-overlapping $p\times p$ patches, flattened, and projected analogously, yielding $(N/p)^2$ scene tokens. The receiver location is projected by a single linear layer into one location token.

\textbf{Physically grounded positional and modality embeddings.} Tokenization captures content but not physical origin. For channel tokens we encode position separately along the transmit, receive, and subcarrier axes using sinusoidal encodings and concatenate them, $\mathbf{e}^{ch}_{t,r,g} = [\,\mathrm{PE}_{tx}(t) \,\|\, \mathrm{PE}_{rx}(r) \,\|\, \mathrm{PE}_{sc}(g)\,]$, preserving both the spatial structure of the arrays and the frequency structure of the channel. For scene patches we apply a two-layer MLP to sinusoidal features of the patch's normalized distance and bearing relative to the base station, so each token carries its geometric relationship to the reference position. A learnable modality embedding $\mathbf{m}^{(i)} \in \{\mathbf{m}^{ch}, \mathbf{m}^{scn}, \mathbf{m}^{loc}\}$ identifies the token type. The final input token is the sum of the three: what the token contains, where it comes from physically, and which modality it belongs to.

\textbf{Encoder.} Because channel tokens vastly outnumber scene and location tokens, lightweight pre-encoder blocks first operate on channel tokens alone, using rotary position embeddings across the $(t,r,g)$ axes to capture local correlations across nearby antennas and frequencies before cross-modal fusion; absolute encodings are retained for the subsequent stage. All tokens are then concatenated into a single sequence prefixed by \texttt{[PHYSC]} and processed by a single-stream transformer with merged-token self-attention, so all modalities interact from the first layer. A binary attention mask enforces the asymmetry described in the main text: \texttt{[PHYSC]} is an unrestricted query over all tokens but is excluded as a key everywhere, so it can read from the sequence without any token reading from it.

\section{Pretraining objective in full}
\label{app:pretrain}

\textbf{Masked channel reconstruction.} With $\mathcal{M}_{ch}$ the masked channel tokens,
\begin{equation} \mathcal{L}_{ch} \;=\; \sqrt{\frac{1}{|\mathcal{M}_{ch}|}\sum_{i \in \mathcal{M}_{ch}} \bigl\|\hat{\mathbf{h}}_i - \mathbf{h}_i\bigr\|_2^2}. \label{eq:lch} \end{equation}

\textbf{Masked scene reconstruction.} Masking is restricted to the propagation-relevant patches of the link. With $\mathcal{M}_{scn} \subseteq a_k$ the masked propagation-relevant patches and $o_i$ the occupancy label,
\begin{equation} \mathcal{L}_{scn} \;=\; -\frac{1}{|\mathcal{M}_{scn}|}\sum_{i \in \mathcal{M}_{scn}} \bigl[\, o_i \log \hat{o}_i + (1-o_i)\log(1-\hat{o}_i) \,\bigr]. \label{eq:lscn} \end{equation}
Restricting the mask to $a_k$ is what prevents the model from being trained to infer regions that left no trace in the channel.

\textbf{Masked location reconstruction.} $\mathcal{L}_{loc}$ is the root-mean-square error between predicted and true receiver location.

\textbf{Spatial-spectrum distillation.} Let $H^{(\Psi)}_n$ denote slices of the CSI tensor along the transmit or receive antenna axis at a reference antenna and subcarrier. The angle-of-departure and angle-of-arrival spectra are
\begin{equation} s_{\Psi}[b] \;=\; \Bigl|\sum_{n=0}^{N_\Psi - 1} H^{(\Psi)}_n\, e^{-j2\pi n b / N_\Psi}\Bigr|^2, \qquad \Psi \in \{tx, rx\}, \label{eq:spectrum} \end{equation}
with $b$ indexing angular bins. The \texttt{[PHYSC]} token is trained to reconstruct these spectra under
\begin{equation} \mathcal{L}_{\Psi} \;=\; D_{\mathrm{KL}}\bigl(\mathrm{softmax}(s_\Psi) \,\|\, \mathrm{softmax}(\hat{s}_\Psi)\bigr) \;+\; \lambda\,\mathrm{MSE}(\hat{s}_\Psi, s_\Psi), \label{eq:lspec} \end{equation}
where the KL term emphasizes alignment of spectral peak locations and the MSE term preserves their relative strengths.

\textbf{PHYSC--scene contrastive alignment.} The encoded tokens of the propagation-relevant patches are pooled into a scene representation $z^{scn}$, and
\begin{equation} \mathcal{L}_{con} \;=\; -\frac{1}{N}\sum_{i=1}^{N} \log \frac{\exp\bigl(z^{physc}_i \cdot z^{scn}_i / \tau\bigr)}{\sum_{j=1}^{N} w_{ij}\exp\bigl(z^{physc}_i \cdot z^{scn}_j / \tau\bigr)}, \label{eq:lcon} \end{equation}
with temperature $\tau$ and weights $w_{ij} \in (0,1]$ that down-weight negative pairs drawn from spatially nearby links, preventing the objective from penalizing genuinely similar propagation geometries as hard negatives.

\textbf{Pretraining objective.} The five terms are combined as
\begin{equation} \mathcal{L}_{\mathrm{pre}} \;=\; \lambda_{ch}\,\mathcal{L}_{ch} \;+\; \lambda_{scn}\,\mathcal{L}_{scn} \;+\; \lambda_{loc}\,\mathcal{L}_{loc} \;+\; \lambda_{spec}\bigl(\mathcal{L}_{\Psi}^{tx} + \mathcal{L}_{\Psi}^{rx}\bigr) \;+\; \lambda_{con}\,\mathcal{L}_{con}, \label{eq:pretrain} \end{equation}
where $\mathcal{L}_{ch}$ and $\mathcal{L}_{loc}$ are reconstruction errors on the masked channel entries and the masked location, $\mathcal{L}_{scn}$ is a binary cross-entropy on the masked propagation-relevant scene patches, $\mathcal{L}_{\Psi}$ combines a Kullback--Leibler term that aligns the peaks of the predicted and true spatial spectra with a mean-squared term that preserves their relative strengths, and $\mathcal{L}_{con}$ is the distance-weighted contrastive alignment between \texttt{[PHYSC]} and the propagation-relevant scene representation. 

\textbf{Decoder and configuration.} Following masked autoencoding, a lightweight transformer decoder receives the encoded visible tokens together with learnable tokens standing for the missing inputs and predicts the corresponding channel values, scene occupancy, and location. It is used only during pretraining and discarded afterwards. 

\section{Diffusion decoder details}
\label{app:diffusion}

\textbf{Noise schedule.} We use the cosine schedule, $\bar{\alpha}_t = f(t)/f(0)$ with $f(t) = \cos^2\!\bigl(\frac{t/T + \varsigma}{1+\varsigma}\cdot\frac{\pi}{2}\bigr)$ and offset $\varsigma = 0.008$, and $\beta_t = 1 - \bar{\alpha}_t/\bar{\alpha}_{t-1}$ clipped at $0.999$ for numerical stability.

\textbf{Denoiser.} Each block comprises timestep-modulated self-attention over the grid tokens, plain cross-attention into the conditioning set, and a timestep-modulated feed-forward network. Modulation follows the adaptive layer normalization scheme in which the timestep embedding produces shift, scale, and gate parameters per block, with gates initialized to zero so that each block begins as an identity map. Cross-attention is left unmodulated, since a variable-size conditioning set does not fold naturally into a single global modulation vector the way a scalar timestep does. A linear adapter maps the encoder embedding dimension to the decoder width, and the output head is zero-initialized.

\textbf{Conditioning dropout.} During training the entire conditioning set is replaced by a learned null embedding with probability $p_{\mathrm{drop}} = 0.15$. We use this as a regularizer against over-reliance on any particular conditioning configuration; at sampling time the conditional prediction is used directly, without guidance.

\textbf{Sampler.} We use ancestral sampling from $\mathbf{x}_T \sim \mathcal{N}(0,\mathbf{I})$. At each step the clean-scene estimate $\hat{\mathbf{x}}_0 = (\mathbf{x}_t - \sqrt{1-\bar{\alpha}_t}\,\boldsymbol{\epsilon}_\theta(\mathbf{x}_t, t, \mathcal{Z}))/\sqrt{\bar{\alpha}_t}$ is clamped to $[-1,1]$ before forming the posterior mean, which stabilizes the trajectory for a target that is binary at $t=0$. Each of the $M$ trajectories is driven by independent noise and binarized by sign at the final step, which is the condition under which the concentration results of Appendix~\ref{app:proofs} apply.

\section{Proofs of the consensus propositions}
\label{app:proofs}

In the following, we establish the propositions that characterize the theoretical properties of our consensus estimator. We first show that consensus under the learned posterior yields the Bayes-optimal per-cell decision. We then show that consensus from a near-optimal posterior recovers the true scene values on the observable set, with the reconstruction error decreasing as the number of observations increases.

\begin{proposition}[Consensus is the plug-in Bayes decision] \label{prop:consensus} Let $\hat{\mathbf{s}}^{(1)},\dots,\hat{\mathbf{s}}^{(M)}$ be i.i.d.\ draws from $p_\theta(\cdot\mid\mathcal{Z})$ and let $\pi_i^{\theta} = p_\theta(s_i=1\mid\mathcal{Z})$. Then for every cell $i$: (a) $\hat{\pi}_i \to \pi_i^{\theta}$, with
\linebreak
$\Pr[\,|\hat{\pi}_i - \pi_i^{\theta}| \geq \varepsilon\,] \leq 2e^{-2M\varepsilon^{2}}$; (b) if $\pi_i^{\theta}\neq\tfrac12$, then $\hat{s}_i^{\mathrm{cons}}$ equals the per-cell Bayes-optimal decision under $p_\theta$ with probability at least $1 - 2e^{-2M(\pi_i^{\theta}-1/2)^{2}}$; and (c) $\hat{\pi}_i(1-\hat{\pi}_i) \to \mathrm{Var}_\theta[s_i\mid\mathcal{Z}]$. \end{proposition}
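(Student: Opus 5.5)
The plan is to reduce everything to a single observation. For a fixed cell $i$, the binarized values $\hat{s}_i^{(1)},\dots,\hat{s}_i^{(M)}$ are i.i.d.\ Bernoulli random variables with mean $\pi_i^\theta$. This holds because each trajectory is driven by independent noise and binarized by sign at the final step (Appendix~\ref{app:diffusion}). As a result, $\hat{\pi}_i$ is an empirical mean of $[0,1]$-valued i.i.d.\ variables, and all three parts follow from standard concentration and continuity arguments applied to that mean.

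For (a), I would apply Hoeffding's inequality to $\hat{\pi}_i-\pi_i^\theta=\frac1M\sum_m(\hat s_i^{(m)}-\pi_i^\theta)$ with range $[0,1]$. This gives $\Pr[|\hat\pi_i-\pi_i^\theta|\ge\varepsilon]\le 2e^{-2M\varepsilon^2}$ directly. Almost sure convergence then follows either from the strong law of large numbers or from Borel--Cantelli, since $\sum_M 2e^{-2M\varepsilon^2}<\infty$ for each $\varepsilon>0$. Convergence in probability is immediate from the bound.

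For (b), the first step is to identify the per-cell Bayes decision under $p_\theta$ with Hamming (0--1 per-cell) loss. The expected loss of predicting $1$ is $1-\pi_i^\theta$, and the expected loss of predicting $0$ is $\pi_i^\theta$. The optimal decision is therefore $\mathbf 1[\pi_i^\theta>\tfrac12]$, and it is unique when $\pi_i^\theta\neq\tfrac12$. Next, set $\delta=|\pi_i^\theta-\tfrac12|>0$. The consensus $\mathbf 1[\hat\pi_i>\tfrac12]$ can disagree with the Bayes decision only if $\hat\pi_i$ lands on the other side of (or exactly on) $\tfrac12$. That event forces $|\hat\pi_i-\pi_i^\theta|\ge\delta$. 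Plugging $\varepsilon=\delta$ into (a) bounds the failure probability by $2e^{-2M(\pi_i^\theta-1/2)^2}$. A one-sided Hoeffding bound would even give $e^{-2M\delta^2}$, so the stated bound holds with room to spare. The main point needing care is the tie case $\hat\pi_i=\tfrac12$: under the strict threshold it counts as predicting $0$. When $\pi_i^\theta>\tfrac12$ this is an error, but it is still contained in the event $|\hat\pi_i-\pi_i^\theta|\ge\delta$, so the bound is unaffected.

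For (c), I would use the Bernoulli identity $\mathrm{Var}_\theta[s_i\mid\mathcal Z]=\pi_i^\theta(1-\pi_i^\theta)$. The map $g(x)=x(1-x)$ is continuous, and in fact $1$-Lipschitz on $[0,1]$ because $|g'(x)|=|1-2x|\le1$. The continuous mapping theorem applied to (a) then gives $\hat\pi_i(1-\hat\pi_i)\to\pi_i^\theta(1-\pi_i^\theta)$. The Lipschitz property also transfers the same Hoeffding rate: $\Pr[|g(\hat\pi_i)-g(\pi_i^\theta)|\ge\varepsilon]\le2e^{-2M\varepsilon^2}$. No step here is a real obstacle. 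The only substantive modeling point is justifying the i.i.d.\ Bernoulli structure of the per-cell samples, which comes from the independent sampler trajectories.
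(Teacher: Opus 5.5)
Your proposal is correct and follows the paper's proof essentially step for step. Both arguments use i.i.d.\ Bernoulli per-cell samples from independent trajectories, then Hoeffding plus the strong law for (a), Hoeffding at $\varepsilon=|\pi_i^\theta-\tfrac12|$ with the more-probable-value Bayes rule for (b), and the continuous mapping theorem for (c); your extras (the tie case $\hat\pi_i=\tfrac12$, the one-sided sharpening to $e^{-2M(\pi_i^\theta-1/2)^2}$, and the $1$-Lipschitz transfer of the Hoeffding rate to $\hat\pi_i(1-\hat\pi_i)$) are valid refinements that the paper does not spell out.
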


\begin{proposition}[Limit under the true posterior] \label{prop:limit} Assume the model is well specified with $\mathcal{Z}$ sufficient and the observations conditionally independent given $\mathbf{s}$. For $i \notin \mathcal{A}$ the marginal equals the conditional prior $p(s_i=1\mid\mathbf{s}_{\mathcal{A}})$ and the data cannot reduce its variance. For $i \in \mathcal{A}$, let $n_i$ count the observations separating $\mathbf{s}^{\star}$ from the alternatives differing at cell $i$; the consensus recovers $s_i^{\star}$ with error decaying in $n_i$. \end{proposition}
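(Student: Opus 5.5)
The argument has two parts, one per regime, with Proposition~\ref{prop:consensus} bridging from the posterior to the consensus readout. Under well-specification and sufficiency of $\mathcal{Z}$ we have $p_\theta(\mathbf{s}\mid\mathcal{Z}) = p(\mathbf{s}\mid\mathcal{O})$. Proposition~\ref{prop:consensus}(b) then reduces the behaviour of $\hat{s}^{\mathrm{cons}}_i$ to that of the sign of $\pi_i - \tfrac12$, up to a term $2e^{-2M(\pi_i-1/2)^2}$. So it suffices to study the true marginals $\pi_i$.

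\textbf{Cells outside $\mathcal{A}$.} I would start from the collective likelihood \eqref{eq:collective}. Each factor $p(H_k\mid\mathbf{s})$ is independent of $\mathbf{s}_{a_k^c}$, by the same argument as in Proposition~\ref{prop:count}. Hence the product depends on $\mathbf{s}$ only through $\mathbf{s}_{\mathcal{A}}$. Write $p(\mathbf{s}) = p(\mathbf{s}_{\mathcal{A}})\,p(\mathbf{s}_{\mathcal{A}^c}\mid\mathbf{s}_{\mathcal{A}})$ and apply Bayes; this gives exactly the factorization \eqref{eq:factorization}. Marginalizing over $\mathbf{s}_{\mathcal{A}^c\setminus\{i\}}$ yields $\pi_i = \mathbb{E}[p(s_i=1\mid\mathbf{s}_{\mathcal{A}})\mid\mathcal{O}]$. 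This collapses to $p(s_i=1\mid\mathbf{s}^\star_{\mathcal{A}})$ once the posterior on $\mathbf{s}_{\mathcal{A}}$ concentrates, which is the second part below. For the variance claim I would use the law of total variance:
\[
\mathrm{Var}[s_i\mid\mathcal{O}] \ge \mathbb{E}\bigl[\mathrm{Var}(s_i\mid\mathbf{s}_{\mathcal{A}})\mid\mathcal{O}\bigr].
\]
The right-hand side is a prior-conditional quantity that no observation can move, since $\mathcal{O}$ enters only through the law of $\mathbf{s}_{\mathcal{A}}$.

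\textbf{Cells in $\mathcal{A}$.} Let $\mathcal{S}_i^{-}$ be the scenes with $s_i\neq s_i^\star$. Then
\[
1-\pi_i(s_i^\star) = \frac{\sum_{\mathbf{s}\in\mathcal{S}_i^-} p(\mathbf{s})\prod_k p(H_k\mid\mathbf{s})}{\sum_{\mathbf{s}} p(\mathbf{s})\prod_k p(H_k\mid\mathbf{s})}.
\]
I would lower-bound the denominator by the $\mathbf{s}^\star$ term and split each alternative's likelihood ratio into factors. Observations that do not separate $\mathbf{s}$ from $\mathbf{s}^\star$ contribute ratio $1$, as with propagation-equivalent scenes in Proposition~\ref{prop:role}. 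For the $n_i$ separating observations I would assume a uniform separation margin: $\mathbb{E}_{H_k\sim p(\cdot\mid\mathbf{s}^\star)}\sqrt{p(H_k\mid\mathbf{s})/p(H_k\mid\mathbf{s}^\star)} \le \rho<1$, i.e.\ a Bhattacharyya-coefficient bound. By conditional independence, Markov's inequality on the square-root ratio gives
\[
\mathbb{E}\bigl[\sqrt{1-\pi_i(s_i^\star)}\bigr] \le \sum_{\mathbf{s}\in\mathcal{S}_i^-}\sqrt{p(\mathbf{s})/p(\mathbf{s}^\star)}\,\rho^{n_i},
\]
so the error decays geometrically in $n_i$. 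Combining this with Proposition~\ref{prop:consensus}(b) bounds the total misrecovery probability. That probability is at most $\Pr[\pi_i(s_i^\star)\le \tfrac12+\delta]$ plus $2e^{-2M\delta^2}$, so recovery error decays in both $n_i$ and $M$.

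\textbf{Main obstacle.} The hard step is making ``the observations separating $\mathbf{s}^\star$ from the alternatives'' uniform over the exponentially many alternatives in $\mathcal{S}_i^-$. The prefactor $\sum\sqrt{p(\mathbf{s})/p(\mathbf{s}^\star)}$ can be large, and the statement's $n_i$ is a single count. I would first try defining $n_i$ as the minimum, over $\mathbf{s}\in\mathcal{S}_i^-$, of the number of separating observations, and absorb the prefactor into a prior-dependent constant. Propagation-equivalent alternatives are never separated, so the statement has to be read as excluding them, i.e.\ recovery of cells where $s_i$ is identifiable. Resolution-limited alternatives enter only through $\rho$ being close to $1$.
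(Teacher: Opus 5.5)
Your treatment of cells outside $\mathcal{A}$ follows the paper: factorize the prior, use invariance in Bayes' rule, then marginalize. Your form $\pi_i=\mathbb{E}[p(s_i=1\mid\mathbf{s}_{\mathcal{A}})\mid\mathcal{O}]$ with a law-of-total-variance floor is more precise than the paper's $\pi_i=p(s_i=1\mid\mathbf{s}_{\mathcal{A}})$.

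For $i\in\mathcal{A}$ you take a genuinely different route. The paper fixes one alternative $\mathbf{s}'$ and writes the log posterior odds as $L_0+\sum_k X_k$. It lower-bounds the drift by $n_i\delta_i$, with $\delta_i$ the smallest KL divergence over separating observations, and applies a Chernoff bound under an added sub-exponential assumption (A4). Its justification is that an error at cell $i$ requires $L\le 0$. You instead bound the total alternative mass $1-\pi_i(s_i^\star)$, using subadditivity of $\sqrt{\cdot}$ and a Bhattacharyya margin $\rho$. The underlying exponential tilting is the same: the paper's Chernoff factor $\mathbb{E}[e^{-\lambda X_k}]$ at $\lambda=\tfrac12$ is exactly your Bhattacharyya coefficient.

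Your version buys two things:
\begin{itemize}
\item It controls the marginal that the consensus actually thresholds. A pairwise criterion $L\le 0$ treats one alternative at a time, so it misses the case where many alternatives each lose to $\mathbf{s}^\star$ yet jointly carry more than half the posterior mass.
\item It needs neither (A4) nor the marginalization over physical scenes $\mathbf{g}$ that the paper uses to make the KL finite, because the coefficient always lies in $[0,1]$.
\end{itemize}

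It also has two costs:
\begin{itemize}
\item The prior-dependent prefactor is constant in $n_i$ but can be exponential in the grid size. Invariance lets you integrate out $\mathbf{s}_{\mathcal{A}^c}$ in numerator and denominator, so the sum only needs to run over $\{0,1\}^{\mathcal{A}}$, with $p(\mathbf{s}^\star_{\mathcal{A}})$ replacing $p(\mathbf{s}^\star)$.
\item Your exponent $-\log\rho$ is, by Jensen, at most $\delta_i/2$, so a rate stated in KL terms still needs something like (A4).
\end{itemize}
Your reading of $n_i$ as a minimum over alternatives, with propagation-equivalent alternatives excluded, is the same implicit reading the paper's $\delta_i$ relies on.

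One step needs repair. You derive that $\prod_k p(H_k\mid\mathbf{s})$ depends only on $\mathbf{s}_{\mathcal{A}}$ by appeal to Proposition~\ref{prop:count}. That argument only covers flips of $\mathbf{s}^\star$ outside $a_k$, where $a_k$ is defined by the paths of $\mathbf{s}^\star$ itself. The factorization needs the likelihood to be a function of $\mathbf{s}_{\mathcal{A}}$ for every scene in the posterior's support. A scene that differs from $\mathbf{s}^\star$ inside $\mathcal{A}$ can route paths through $\mathcal{A}^c$. So can a newly filled cell of $\mathcal{A}^c$ that acts as a reflector or diffracting edge. The paper does not derive this property; it posits it as (A2), forward invariance, and you should state it as a hypothesis too.

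A smaller point: Markov's inequality belongs at the conversion
\[
\Pr[\pi_i(s_i^\star)\le\tfrac12+\delta]\le\mathbb{E}\sqrt{1-\pi_i(s_i^\star)}/\sqrt{\tfrac12-\delta},
\]
not in deriving the expectation bound. That bound itself comes from subadditivity of $\sqrt{\cdot}$ plus conditional independence.
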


\subsection{Proof of Proposition~\ref{prop:consensus}}

Fix a cell $i$. Each sample $\hat{s}_i^{(m)}$ is a Bernoulli variable with parameter $\pi_i^{\theta} = p_\theta(s_i = 1 \mid \mathcal{Z})$, and the samples are independent because each reverse trajectory is driven by independent noise.

\paragraph{(a) Concentration.} The variables are bounded in $[0,1]$, so Hoeffding's inequality gives, for any $\varepsilon > 0$,
\begin{equation} \Pr\Bigl[\,\bigl|\hat{\pi}_i - \pi_i^{\theta}\bigr| \geq \varepsilon \,\Bigr] \;\leq\; 2\exp\bigl(-2M\varepsilon^{2}\bigr), \label{eq:hoeffding} \end{equation}
and $\hat{\pi}_i \to \pi_i^{\theta}$ as $M\to\infty$ by the strong law of large numbers.

\paragraph{(b) Decision.} The consensus $\hat{s}_i^{\mathrm{cons}} = \mathbf{1}[\hat{\pi}_i > 1/2]$ differs from $\mathbf{1}[\pi_i^{\theta} > 1/2]$ only if $\hat{\pi}_i$ falls on the opposite side of $1/2$, which requires $|\hat{\pi}_i - \pi_i^{\theta}| \geq |\pi_i^{\theta} - 1/2|$. Applying \eqref{eq:hoeffding} with $\varepsilon = |\pi_i^{\theta} - 1/2|$,
\begin{equation} \Pr\Bigl[\hat{s}_i^{\mathrm{cons}} \neq \mathbf{1}[\pi_i^{\theta} > \tfrac12]\Bigr] \;\leq\; 2\exp\Bigl(-2M\bigl(\pi_i^{\theta}-\tfrac12\bigr)^{2}\Bigr). \label{eq:margin} \end{equation}
The limit $\mathbf{1}[\pi_i^{\theta} > 1/2]$ is the Bayes decision for cell $i$ under $p_\theta$ with $0$--$1$ loss, i.e.\ the minimizer of the expected per-cell Hamming error, since for a Bernoulli posterior the optimal decision is the more probable value.

\paragraph{(c) Variance.} The map $x \mapsto x(1-x)$ is continuous, so $\hat{\pi}_i(1-\hat{\pi}_i) \to \pi_i^{\theta}(1-\pi_i^{\theta}) = \mathrm{Var}_\theta[s_i\mid\mathcal{Z}]$ by the continuous mapping theorem. \hfill$\square$

\medskip
\noindent Two remarks. First, \eqref{eq:margin} shows that the number of samples required scales as $1/(\pi_i^{\theta}-1/2)^{2}$: cells whose marginal is close to $1/2$ are exactly those that need many draws to stabilize, and they are also the cells on which the samples disagree most. The instability of the estimate and the ambiguity it reflects are the same phenomenon. Second, the proposition concerns the \emph{learned} posterior; it says that sampling extracts a decision already encoded in $p_\theta$ and adds no information of its own.

\subsection{Proof of Proposition~\ref{prop:limit}}

Rather than assuming a uniform information gap at every covered cell, we define the relevant count directly in terms of which observations can distinguish the competing hypotheses.

\begin{definition}[Separating observation]
\label{def:separating}
Observation $k$ \emph{separates} two scenes $\mathbf{s},\mathbf{s}'$ if $\mathcal{F}(\mathbf{s},\mathbf{p}_k) \neq \mathcal{F}(\mathbf{s}',\mathbf{p}_k)$, i.e.\ the two scenes induce different path parameters for link $k$. For a cell $i$, we write $n_i$ for the number of observations that separate $\mathbf{s}^{\star}$ from the alternatives differing at cell $i$.
\end{definition}

Two cases give $X_k = 0$ and are excluded from $n_i$ by construction: observations with $i \notin a_k$, which are invariant to cell $i$; and observations that are role-degenerate for the pair, which by Proposition~\ref{prop:role} induce identical path parameters under both hypotheses despite covering the cell. Neither requires a separate assumption, and $n_i \leq |\{k : i \in a_k\}|$ in general.

We assume:
\begin{itemize}[leftmargin=3em, itemsep=1pt, topsep=2pt]
\item[(A1)] \emph{Well specification.} $p_\theta(\cdot\mid\mathcal{Z}) = p(\cdot\mid\mathcal{O})$, with $\mathcal{Z}$ sufficient for $\mathbf{s}$.
\item[(A2)] \emph{Forward invariance.} $p(\mathcal{O}\mid\mathbf{s})$ depends on $\mathbf{s}$ only through $\mathbf{s}_{\mathcal{A}}$.
\item[(A3)] \emph{Conditional independence.} The observations are independent given $\mathbf{s}$.
\item[(A4)] \emph{Regularity.} The log-likelihood ratios $X_k$ are sub-exponential, so their moment generating functions exist near the origin.
\end{itemize}

\textbf{(i) Unobservable cells.} Let $i \notin \mathcal{A}$. Factorizing the prior as $p(\mathbf{s}) = p(\mathbf{s}_{\mathcal{A}})\,p(\mathbf{s}_{\mathcal{A}^{c}}\mid\mathbf{s}_{\mathcal{A}})$ and applying (A2) to Bayes' rule gives the factorization of \eqref{eq:factorization}, in which $\mathcal{O}$ does not appear in the second factor. Marginalizing, $\pi_i = p(s_i = 1 \mid \mathbf{s}_{\mathcal{A}})$, so the belief at cell $i$ is supplied entirely by the prior conditioned on the observable part of the scene. As the data sharpen $p(\mathbf{s}_{\mathcal{A}}\mid\mathcal{O})$, the variance at cell $i$ converges to the prior-conditional variance $\mathrm{Var}_{\mathrm{prior}}[s_i\mid\mathbf{s}_{\mathcal{A}}^{\star}]$, a floor the measurements cannot lower. This holds in both regimes below.

\textbf{(ii) Observable cells.} The forward map is deterministic in the \emph{physical} scene --- the mesh geometry, building heights, and material properties from which the ray tracer computes propagation --- but the occupancy grid $\mathbf{s}$ is a coarse projection of that description. Writing $\mathbf{g}$ for the physical scene, many $\mathbf{g}$ are consistent with the same grid, and they produce different channels; the likelihood entering the posterior is therefore the marginal
\begin{equation} p(H_k \mid \mathbf{s}) \;=\; \int p\bigl(H_k \mid \mathbf{g}\bigr)\, p(\mathbf{g} \mid \mathbf{s})\, d\mathbf{g}, \label{eq:marginal_likelihood} \end{equation}
which is a density over channel realizations rather than a point mass, even when the simulator itself is noiseless. The Kullback--Leibler divergences below are accordingly well defined.

Let $\delta_i > 0$ denote the smallest divergence $\mathrm{KL}\bigl(p(H_k\mid\mathbf{s}^{\star}) \,\|\, p(H_k\mid\mathbf{s}')\bigr)$ over the separating observations counted by $n_i$ and the alternatives $\mathbf{s}'$ in contention at cell $i$; separation guarantees $\delta_i > 0$. Using (A3), the log posterior odds between $\mathbf{s}^{\star}$ and such an alternative decompose as
\begin{equation} L \;=\; \log\frac{p(\mathbf{s}^{\star}\mid\mathcal{O})}{p(\mathbf{s}'\mid\mathcal{O})} \;=\; \underbrace{\log\frac{p(\mathbf{s}^{\star})}{p(\mathbf{s}')}}_{=:L_0} \;+\; \sum_{k=1}^{K} X_k, \qquad X_k \;=\; \log\frac{p(H_k\mid\mathbf{s}^{\star})}{p(H_k\mid\mathbf{s}')}, \label{eq:logodds} \end{equation}
where $X_k \equiv 0$ for every non-separating observation, leaving $n_i$ nonzero terms. Taking expectations under the truth, $\mathbb{E}[X_k] \geq \delta_i$, so the drift $\mathbb{E}[\sum_k X_k] \geq n_i\delta_i$ grows linearly in $n_i$ while the standard deviation of the sum grows as $O(\sqrt{n_i})$ by independence. An error at cell $i$ requires $L \leq 0$, so the sum must fall below its mean by at least $n_i\delta_i + L_0$. A Chernoff bound gives, for suitable $\lambda>0$,
\begin{equation} \Pr\bigl[\textstyle\sum_k X_k + L_0 \leq 0\bigr] \;\leq\; e^{\lambda L_0}\!\!\prod_{k \ \mathrm{separating}}\!\! \mathbb{E}\bigl[e^{-\lambda X_k}\bigr] \;\leq\; e^{-c\, n_i \delta_i}, \label{eq:chernoff} \end{equation}
for a constant $c>0$ depending on the sub-exponential parameters, with $L_0$ absorbed for $n_i$ large enough. Combining with (A1) and Proposition~\ref{prop:consensus}, the consensus recovers $s_i^{\star}$ with error decaying in $n_i\delta_i$. \hfill$\square$

\textbf{Remarks.} The stochasticity in \eqref{eq:marginal_likelihood} is intrinsic to the problem rather than an artifact of measurement noise: the occupancy grid does not determine building heights, facade materials, or sub-cell geometry, all of which shape the channel, so a cell's occupancy constrains the observed channel only in distribution. This is the same representational gap discussed in Appendix~\ref{app:role}, where a link may clear an occupied cell entirely. It also bounds what any amount of data can achieve, since $\delta_i$ is limited by how much of the physical scene the grid leaves unspecified, and is therefore one reason accuracy saturates below unity in Section~\ref{sec:sweep} even as observations accumulate. The quantity governing recovery at a cell is $n_i$, the number of observations that actually separate the competing hypotheses there, which excludes both uncovered and role-degenerate observations. Role degeneracy therefore does not halt recovery; it reduces the effective number of informative observations, and only if every link in the deployment were non-separating for a pair differing at a cell would that cell remain unresolved. Regarding (A1), diffusion models are consistent in the limit of data and capacity, not at finite scale, so the proposition describes the ideal posterior and the empirical question is how closely $p_\theta$ approaches it; the comparison against a decoder conditioned on raw channels probes the sufficiency half of this assumption. Regarding (A2), real propagation leaks weak energy off the dominant paths, so in practice the boundary of $\mathcal{A}$ is soft and concentration on $\mathcal{A}^{c}$ is slow rather than exactly absent.

\vspace{-4mm}
\section{Dataset and training details}
\label{app:setup}
\vspace{-2mm}
\paragraph{Dataset.} Channels are simulated with Sionna RT~\citep{hoydis2023sionna} over more than 11{,}000 outdoor urban environments spanning 200 cities, each covering a $200\,\mathrm{m}\times 200\,\mathrm{m}$ area, with three-dimensional building geometry extracted from OpenStreetMap~\citep{openstreetmap} using the Geo2SigMap pipeline~\citep{li2024geo2sigmap}. Simulations use a carrier frequency of 5.8~GHz with 100~MHz bandwidth and 128 subcarriers. The base station is equipped with a uniform linear array of $N_{tx}=32$ antennas at a height of 15~m and serves more than 2{,}000 randomly placed users per environment, each equipped with an $N_{rx}=8$-element uniform linear array at 1.5~m. The dataset comprises over 30 million samples, each consisting of a complex channel tensor, the user location, and the building height map. Occupancy grids are obtained on a $20\times 20$ grid, i.e.\ $\Delta R = 10$~m, whose cells coincide with the scene patches of the foundation model. Propagation-relevant sets are recorded from the ray tracer as described in Appendix~\ref{app:forward}.

\textbf{Foundation model pretraining.} The encoder consists of 12 transformer layers with 512-dimensional embeddings and a 2{,}450-token context window, totaling approximately 60M trainable parameters. Pretraining uses the objective of \eqref{eq:pretrain} with progressive masking combinatorial ratios. It was performed on 32 NVIDIA L40 GPUs for 9{,}600 GPU-hours over 48 epochs, processing approximately 882 billion tokens.

\textbf{Diffusion decoder training.} The denoiser has depth 6, width 192, 6 attention heads, and MLP ratio 4, with a linear adapter mapping the 512-dimensional encoder embedding to the decoder width. Because the encoder is frozen, its \texttt{[PHYSC]} embeddings are extracted once for up to 2{,}000 observations per scene and cached, so decoder training involves no forward or backward pass through the encoder. We use $T=1000$ steps with the cosine schedule of Appendix~\ref{app:diffusion} and conditioning dropout $p_{\mathrm{drop}}=0.15$. To serve any number of observations with one model, the number of observations per training step is redrawn uniformly from $[K_{\min}, K_{\max}] = [100, 500]$ and shared across the scenes of the batch. Training uses AdamW with learning rate $10^{-4}$ and weight decay $10^{-4}$, a linear warmup over the first epoch from $0.1\times$ the base rate followed by cosine decay, and gradient-norm clipping at $1.0$, with 32 scenes per GPU on 8 GPUs for 100 epochs of 500 steps each. Samples are drawn with the conditional prediction, without guidance, and the consensus is formed over $M=32$ samples.

\textbf{Evaluation.} All models are evaluated on 671 held-out test environments. For each scene, observations are loaded once at the largest sweep value, $K=1000$, and every smaller $K$ uses a prefix of the same sequence, so observation sets are nested across sweep points and differences across $K$ reflect added observations rather than resampling. The per-observation encodings of every model are computed once per scene and reused across sweep points; only the fusion step, or the reverse-diffusion sampling for the two diffusion models, is repeated for each $K$.

\vspace{-4mm}
\section{Marginal Prob. + FM}
\label{app:marginal}
\vspace{-2mm}
\paragraph{Per-observation decoder.} This variant reuses the pretrained model in full, including the lightweight decoder of the pretraining stage. Each observation is encoded with its height-map input entirely masked, and the decoder predicts the occupancy of every scene patch from the channel and location alone,
\begin{equation} q_{k,i} \;=\; \sigma\bigl(g_\psi(f_\phi(H_k, \mathbf{p}_k))_i\bigr), \label{eq:marginalbelief} \end{equation}
which is the occupancy belief supported by observation $k$ alone. Unlike Diffusion + FM, the encoder is not frozen: $\phi$ and $\psi$ are fine-tuned jointly under the objective below.

\textbf{Confidence-shaping objective.} Let $a_k$ be the propagation-relevant set of observation $k$, $\partial a_k$ the cells within one step of $a_k$ (8-connected) that are not in $a_k$, and $u_k = (a_k \cup \partial a_k)^{c}$ the remaining cells. Write $\mathrm{BCE}_\beta(q,s) = -[\,\beta\, s\log q + (1-s)\log(1-q)\,]$ for class-balanced cross-entropy with $\beta = (1-\rho)/\rho$, where $\rho$ is the fraction of occupied cells in the scene, clipped to $[0.01, 0.99]$. The model is trained with

\begin{equation}
\begin{aligned}
\mathcal{L}_{\mathrm{marg}}
&= \frac{1}{|a_k|}\sum_{i\in a_k}\mathrm{BCE}_\beta(q_{k,i}, s_i)
-\lambda_c\,\frac{1}{|a_k|}\sum_{i\in a_k}\bigl|2q_{k,i}-1\bigr| \\
&\quad + \lambda_n\,\frac{1}{|\partial a_k|}\sum_{i\in\partial a_k}\mathrm{BCE}(q_{k,i}, s_i)
+\lambda_u\,\frac{1}{|u_k|}\sum_{i\in u_k}\Bigl(q_{k,i}-\tfrac12\Bigr)^{2}.
\end{aligned}
\label{eq:marginalloss}
\end{equation}

The first term fits the occupancy of the cells the observation actually constrains, and the second rewards confident predictions there. The third supervises the boundary of the relevant set with a reduced weight, since cells adjacent to a path are partially constrained. The fourth drives predictions on cells the observation does not reach toward $1/2$. This last term is what gives the belief its meaning: under cross-entropy alone, every cell would be pushed toward a confident label and the belief could no longer indicate where the observation is uninformative. The propagation-relevant sets are used only to form this loss during training and are not required at inference. We use $\lambda_c=0.1$, $\lambda_n=0.3$, and $\lambda_u=0.5$, with unit weight on the first term. Training uses AdamW with learning rate $10^{-5}$ and weight decay $10^{-4}$, 32 observations per GPU, a linear warmup over the first epoch followed by cosine decay, and gradient-norm clipping at $1.0$, for 20 epochs, each covering one quarter of the training corpus.

\textbf{Confidence-gated geometric fusion.} Per-observation beliefs are combined by a confidence-gated, geometrically weighted average: an observation contributes to a cell only if its belief there is confident, and in proportion to how close the cell lies to its link,
\begin{equation}
\begin{aligned}
w_{k,i} &= \gamma_{k,i}\,\mathbf{1}\bigl[\,|2q_{k,i}-1| > \tau_c\,\bigr],
\qquad
\gamma_{k,i} = \exp\!\Bigl(-\frac{d_{k,i}^{2}}{2\sigma^{2}}\Bigr), \\
P_i &= \frac{\sum_{k} w_{k,i}\,q_{k,i}}{\sum_{k} w_{k,i}},
\qquad
\hat{s}_i = \mathbf{1}\Bigl[\,P_i > \tfrac12 \ \wedge\ \sum_{k} w_{k,i} > \kappa\,\bigr].
\end{aligned}
\label{eq:fusion}
\end{equation}

where $d_{k,i}$ is the distance from the center of cell $i$ to the segment joining the base station and user $k$. The geometric weight depends only on the base-station and user positions, which are known at inference; the propagation-relevant sets are not used. Cells that receive no sufficiently weighted vote are assigned the free label, so every cell receives a decision and all metrics are computed over the full grid. We use $\tau_c = 0.75$, $\sigma = 50$~m, and $\kappa = 0.05$. The confidence gate is what allows the fusion to discard observations that are uninformative about a cell, and it is effective only because \eqref{eq:marginalloss} drives such beliefs toward $1/2$.

\section{Baselines}
\label{app:baselines}

\textbf{Diffusion + Raw Meas.} The denoiser architecture, noise schedule, sampler, and number of consensus samples are identical to those of Diffusion + FM. The conditioning set is formed from the raw observations by a learnable tokenizer $\tau_\omega$, trained jointly with the decoder,

\begin{equation} \mathcal{Z}_{\mathrm{raw}} \;=\; \bigl\{\tau_\omega(H_k, \mathbf{p}_k)\bigr\}_{k=1}^{K}, \; (\theta^{\star},\omega^{\star}) = \arg\min_{\theta,\omega}\; \mathbb{E}_{(\mathbf{s},\mathcal{O})\sim\mathcal{D}}\ \mathbb{E}_{t,\boldsymbol{\epsilon}} \Bigl[\, \bigl\|\, \boldsymbol{\epsilon} - \boldsymbol{\epsilon}_\theta\bigl(\mathbf{x}_t,\ t,\ \mathcal{Z}_{\mathrm{raw}}\bigr) \bigr\|_2^2 \,\Bigr]. \label{eq:rawdiff} \end{equation}
The tokenizer $\tau_\omega$ is a shallow transformer encoder with 3 layers, width 256, and 8 heads. Each raw observation is tokenized as in the channel front end of the foundation model --- one token per transmit--receive antenna pair and group of 16 subcarriers, giving 2{,}048 channel tokens with factored sinusoidal positional encodings along the transmit, receive, and subcarrier axes --- together with a location token, and the encoded sequence is mean-pooled into a single token per observation. The conditioning set $\mathcal{Z}_{\mathrm{raw}}$ therefore has the same cardinality $K$ as the set of \texttt{[PHYSC]} embeddings it replaces and enters the identical denoiser through a linear adapter to the decoder width. $\tau_\omega$ and $\theta$ are trained jointly from scratch with learning rate $10^{-4}$, conditioning dropout $0.15$, and $K$ drawn from $[100, 500]$, for 100 epochs over the training scenes on 32 GPUs. Because the decoder is held fixed, any difference from Diffusion + FM is attributable to the conditioning.

\textbf{Marginal Prob. + Raw Meas.} An encoder $h_\omega$ with 12 layers, width 512, and 16 heads --- capacity-matched to the foundation-model encoder and using the same channel and location tokenization --- is trained from scratch on individual raw observations, with $N^2$ learned positional query tokens, one per cell, reading out the occupancy of the full grid:
\begin{equation} \tilde{q}_{k,i} = \sigma\bigl(h_\omega(H_k, \mathbf{p}_k)_i\bigr), \; \mathcal{L}_{\mathrm{naive}} \;=\; \frac{1}{N^2}\sum_{i=1}^{N^2} w_{k,i}\,\mathrm{BCE}_\beta\bigl(\tilde{q}_{k,i}, s_i\bigr), \; w_{k,i} = \begin{cases} 3, & i \in a_k,\\ 1, & \text{otherwise}. \end{cases} \label{eq:naiveloss} \end{equation}
Training uses AdamW with learning rate $5\times10^{-5}$ and weight decay $10^{-4}$, 32 observations per GPU on 32 GPUs, a linear warmup over the first epoch followed by cosine decay, and gradient-norm clipping at $1.0$, for 32 epochs, each covering one quarter of the training corpus. The per-observation estimates are fused with the same rule as Marginal Prob. + FM, \eqref{eq:fusion}, with identical parameters, so the two models differ in representation and training objective but not in fusion. Here the confidence gate is ineffective. The loss \eqref{eq:naiveloss} supervises every cell with a hard label whether or not the observation constrains it: relevant cells are up-weighted, but cells outside $a_k$ are still driven toward their occupancy label rather than toward $1/2$. Beliefs about cells an observation never sensed are therefore confident, pass the gate, and enter the average. The geometric weight is then the only remaining filter, and it reflects proximity to the line-of-sight segment rather than the propagation-relevant set: reflected paths reach cells far from that segment, while cells near it need not interact with the link at all.

\section{Evaluation metrics}
\label{app:metrics}

For a reconstruction $\hat{\mathbf{s}}$ and ground truth $\mathbf{s}$, let $\mathrm{TP}$, $\mathrm{FP}$, $\mathrm{TN}$, and $\mathrm{FN}$ count the cells in each outcome, with occupied cells as the positive class. We report
\begin{equation} \mathrm{BA} =\frac12\left(\frac{\mathrm{TP}}{\mathrm{TP}+\mathrm{FN}} + \frac{\mathrm{TN}}{\mathrm{TN}+\mathrm{FP}}\right), \; \mathrm{F1} = \frac{2\,\mathrm{TP}}{2\,\mathrm{TP}+\mathrm{FP}+\mathrm{FN}}, \; \mathrm{IoU} = \frac{\mathrm{TP}}{\mathrm{TP}+\mathrm{FP}+\mathrm{FN}}. \label{eq:metrics} \end{equation}
Balanced accuracy weights the two classes equally, so a constant or random predictor scores $50\%$ irrespective of the fraction of occupied cells; this makes it the appropriate indicator of chance-level behavior. F1 and IoU depend only on the occupied class and are insensitive to correctly predicted free space, which dominates most scenes, and are therefore stricter measures of footprint recovery. For a single scene the two are related by $\mathrm{IoU} = \mathrm{F1}/(2-\mathrm{F1})$; they differ after averaging over scenes and are both reported for comparability with prior work. All metrics are computed per scene over the full $N\times N$ grid and averaged over the 671 test scenes.

\section{Physical Interpretation of Pretrained Representations}
\label{app:tsne}

The hierarchical design of Section~\ref{sec:method} rests on the claim that the \texttt{[PHYSC]} token summarizes the physical structure that produced an observation, rather than the channel as a signal. This appendix examines that claim directly, independently of the reconstruction task, by inspecting the geometry of the representation space itself.

\textbf{Setup.} We encode a set of held-out observations with the frozen foundation model and project the resulting \texttt{[PHYSC]} embeddings to two dimensions with t-SNE~\citep{maaten2008visualizing}, which preserves local neighborhood structure. As a control, we apply the identical projection to the raw channel measurements from which those embeddings were computed. Any structure visible in the embeddings but absent from the raw channels is structure the pretraining objective has introduced, since no information is created by the encoder (Section~\ref{sec:probabilistic}). We color the same two projections by three properties, each relevant to a different aspect of the propagation problem: the spatial cluster of the user position, the dominant propagation direction of the link, and the identity of the environment in which the measurement was taken.

\textbf{Observations.} Figure~\ref{fig:tsne} shows the projections. Under all three colorings, the \texttt{[PHYSC]} embeddings form markedly clearer structure than the raw channels, and each of the three carries a distinct implication for this work.

Separation by \emph{user position} indicates that the representation resolves where a link terminates, which is a prerequisite for placing the evidence an observation carries at the correct location in the scene. Separation by \emph{dominant propagation direction} indicates that the representation disentangles the angular components of the channel --- the departure and arrival directions that determine which structures a link interacts with --- rather than retaining the raw superposition in which those components are mixed. Separation by \emph{scene identity} is the most directly relevant here: observations taken in the same environment are encoded to nearby points, even though the users occupy different positions and therefore observe different channels. The representation thus carries information about the environment that produced a measurement and not merely about the measurement itself, which is what makes a set of embeddings from one scene an informative conditioning set for the posterior over that scene.
\begin{figure*}[t]
    \centering
    \includegraphics[width=1\textwidth]{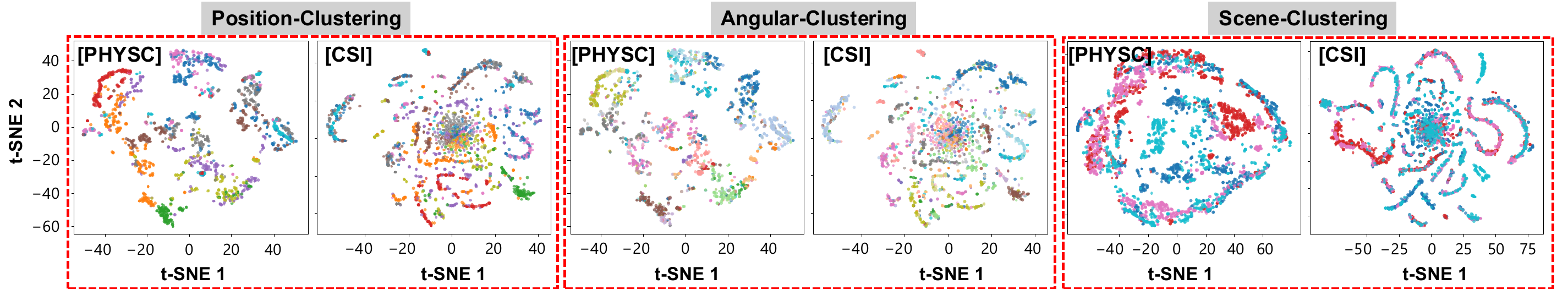}
    \vspace{-5mm}
    \caption{t-SNE of the encoded [PHYSC] token and raw CSI samples, colored by spatial cluster ID, dominant propagation path, and scene identity, showing foundation model representation forms clearer clustering across physically-relevant wireless link properties.}
    \label{fig:tsne}
    \vspace{-4mm}
\end{figure*}

\textbf{Relation to the conditioning argument.} The raw channels contain at least as much information as the embeddings computed from them, so the contrast in Figure~\ref{fig:tsne} is not one of information content but of accessibility. In the raw space, observations that share an environment are not neighbors, and a decoder conditioned there must learn the mapping from channel structure to scene structure before it can exploit the conditioning at all. In the representation space, that structure is already exposed. This is the geometric counterpart of the learnability argument of Section~\ref{sec:method}, and it is consistent with the behavior of the two conditioning schemes in Section~\ref{sec:experiment}: the diffusion decoder conditioned on raw measurements does not improve as observations accumulate, while the same decoder conditioned on \texttt{[PHYSC]} embeddings does.

\vspace{-4mm}
\section{Availability of Position Information}
\label{app:position}
\vspace{-2mm}
Each wireless observation is represented as a channel measurement and the corresponding user position, $(H_k,\mathbf{p}_k)$. To evaluate the effect of incomplete position information, we vary the fraction of measurements with available position labels from $10\%$ to $100\%$, while keeping the total number of measurements per scene fixed $K=500$. Figure~\ref{fig:acc_vs_fraction} reports the resulting balanced accuracy.

\begin{wrapfigure}{r}{0.43\textwidth}
\vspace{-2mm}
    \centering
    \includegraphics[width=\linewidth]{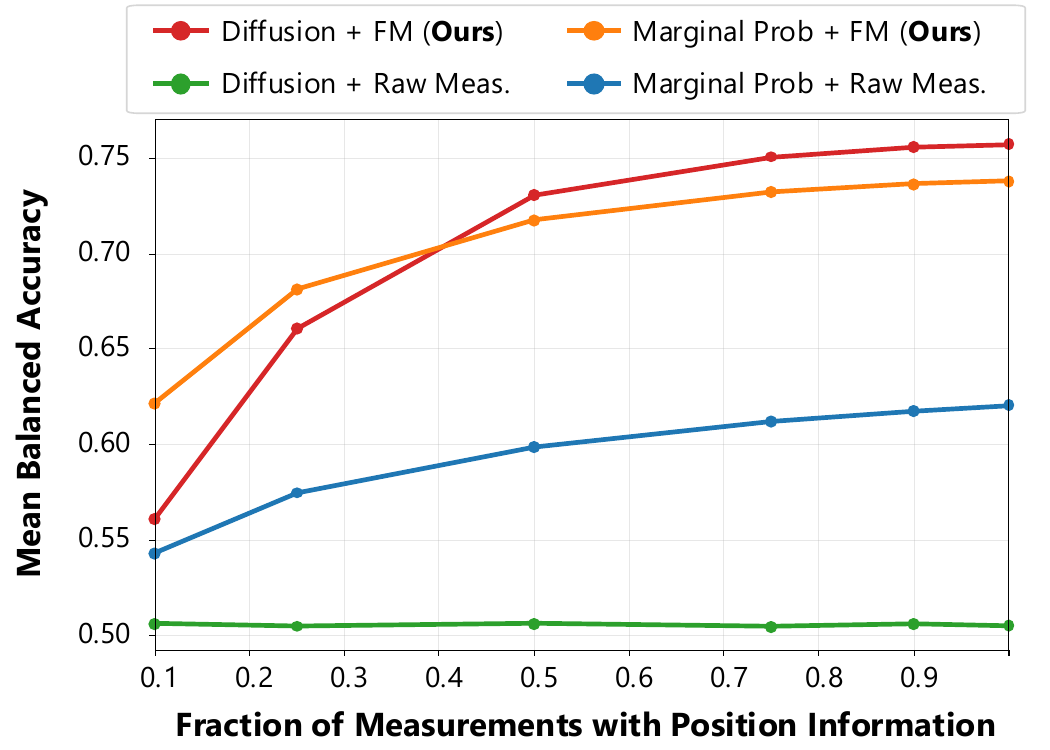}
    \vspace{-7mm}
    \caption{Mean scene reconstruction accuracy as a function of the fraction of measurements with position information.}
    \label{fig:acc_vs_fraction}
    \vspace{-4mm}
\end{wrapfigure}

All models except \textit{Diffusion + Raw Meas.} improve as position availability increases and eventually saturate. \textit{Diffusion + FM} improves from $0.56$ at $10\%$ availability to $0.76$ at $100\%$, while \textit{Marginal Prob. + FM} increases from $0.62$ to $0.74$ and \textit{Marginal Prob. + Raw Meas.} from $0.54$ to $0.62$. In contrast, \textit{Diffusion + Raw Meas.} remains near $0.50$ throughout.

Position information is essential because a channel measurement reveals propagation-induced geometric information but does not by itself anchor that information to an absolute location in the scene. The user position maps the propagation evidence to specific scene cells, enabling the model to spatially localize the structures associated with each observation. As a result, reducing position availability directly reduces the amount of spatially anchored evidence available for reconstruction.

The performance curves largely saturate once roughly half of the measurements have position information, indicating that additional labeled observations provide diminishing benefits after sufficient spatial coverage has been established. At low availability, \textit{Marginal Prob. + FM} initially outperforms \textit{Diffusion + FM}, whereas the ordering reverses as more position-labeled observations become available. This behavior reflects the different inference mechanisms: the marginal formulation applies an explicit geometric fusion rule that remains effective with sparse anchored evidence, while the diffusion model benefits more from a sufficiently rich set of observations that it can jointly integrate through the learned posterior.

Finally, the near-constant performance of \textit{Diffusion + Raw Meas.} across the entire sweep further indicates that its poor reconstruction quality is not primarily caused by insufficient position information, but by its inability to effectively exploit the raw channel measurements as conditioning information.

\section{Robustness to Noisy Position Information}
\label{app:noisy_pos}

Appendix~\ref{app:position} studies the case where position labels are missing for a fraction of observations. In deployment, another non-ideality can be that every observation carries a position, but the position is inaccurate. We therefore evaluate all models with perturbed user positions $\tilde{\mathbf{p}}_k = \mathbf{p}_k + \mathbf{e}_k$, $\mathbf{e}_k \sim \mathcal{N}(\mathbf{0}, \sigma^2 \mathbf{I}_2)$, with $\sigma = 5$\,m per horizontal axis, drawn independently for each observation. All models are trained on exact positions; noise is applied only at test time, with $K = 500$ observations per scene and all other settings as in Section~\ref{sec:experiment}. Under this model the radial error $\|\mathbf{e}_k\|$ is Rayleigh distributed, with median $\sigma\sqrt{2\ln 2} \approx 5.9$\,m, mean $\sigma\sqrt{\pi/2} \approx 6.3$\,m, and 95th percentile $\approx 12.2$\,m. This range is representative of consumer GNSS in outdoor urban settings \citep{merry2019smartphone,palathinkal2026evaluating}.

\begin{table}[h]
\centering
\caption{Reconstruction under noisy user positions ($\sigma = 5$\,m per axis, $K=500$). Values in parentheses give the relative change of the mean with respect to exact positions (Table~1).}
\label{tab:noisy_pos}
\resizebox{\linewidth}{!}{
\begin{tabular}{l ccc ccc ccc}
\toprule
& \multicolumn{3}{c}{Balanced Accuracy $\uparrow$} & \multicolumn{3}{c}{F1 $\uparrow$} & \multicolumn{3}{c}{IoU $\uparrow$} \\
\cmidrule(lr){2-4}\cmidrule(lr){5-7}\cmidrule(lr){8-10}
Model & Mean & Median & P90 & Mean & Median & P90 & Mean & Median & P90 \\
\midrule
Diffusion + Raw Meas. & 0.505 ($\pm$0.0\%) & 0.500 & 0.517 & 0.053 (-1.5\%) & 0.018 & 0.150 & 0.030 ($\pm$0.0\%) & 0.009 & 0.081 \\
Marginal Prob. + Raw Meas. & 0.609 (-1.9\%) & 0.604 & 0.655 & 0.374 (-5.9\%) & 0.371 & 0.480 & 0.233 (-7.1\%) & 0.228 & 0.315 \\
Marginal Prob. + FM (Ours) & \textbf{0.728} (-1.4\%) & \textbf{0.725} & 0.797 & \textbf{0.617} (-2.5\%) & \textbf{0.619} & 0.714 & \textbf{0.453} (-3.7\%) & \textbf{0.449} & 0.556 \\
Diffusion + FM (Ours) & 0.696 (-8.2\%) & 0.676 & \textbf{0.829} & 0.553 (-16.5\%) & 0.538 & \textbf{0.763} & 0.397 (-21.6\%) & 0.368 & \textbf{0.617} \\
\bottomrule
\end{tabular}}
\end{table}

\paragraph{Results.}
Table~\ref{tab:noisy_pos} shows that both of our designed pipelines remain far above the baselines under position information non-idealities. It is worth noting that the diffusion decoder, treats each embedding as precise evidence about a specific location and has never seen mislocalized evidence during training. Observations whose evidence is shifted in different directions then conflict. Posterior samples disagree on the affected cells, and their marginals move toward $1/2$, and the consensus readout suppresses footprints that are otherwise recovered. This mechanism mainly removes true positives, which is consistent with the larger drop in F1 and IoU than in balanced accuracy.

The sensitivity of Diffusion + FM is a train--test mismatch rather than a limitation of the posterior formulation, and it can be addressed directly. The position error/incompleteness can be treated as part of the observation model by training the decoder on embeddings computed from perturbed/missing positions, while keeping the clean occupancy $\mathbf{s}$ as the target. The decoder then learns $p_\theta(\mathbf{s} \mid \tilde{\mathcal{Z}})$ under such conditions. The architecture and training objective will be unchanged. We leave this implementation for future work. 

\newpage
\section{Visual Demonstration of Scene Reconstruction All Models}
\label{app:visual_scene_recons}

\begin{figure*}[h]
    \centering
    \includegraphics[width=1\textwidth]{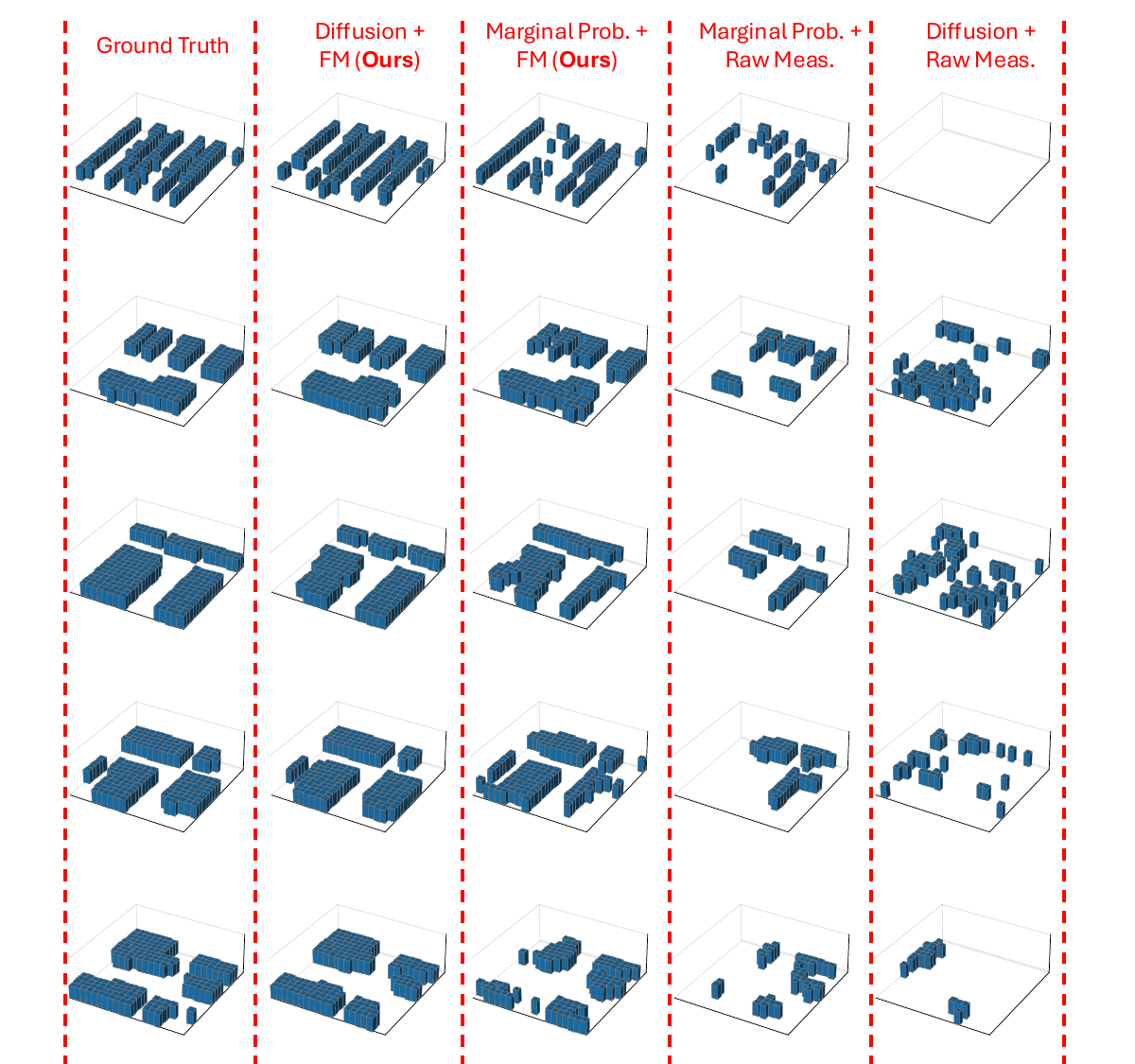}
    \vspace{-5mm}
    \caption{Visual comparison of scene reconstructions produced by all models against the ground-truth scenes.}
    \label{fig:vis_scene_all}
    \vspace{-4mm}
\end{figure*}

\begin{figure*}[h]
    \centering
    \includegraphics[width=1\textwidth]{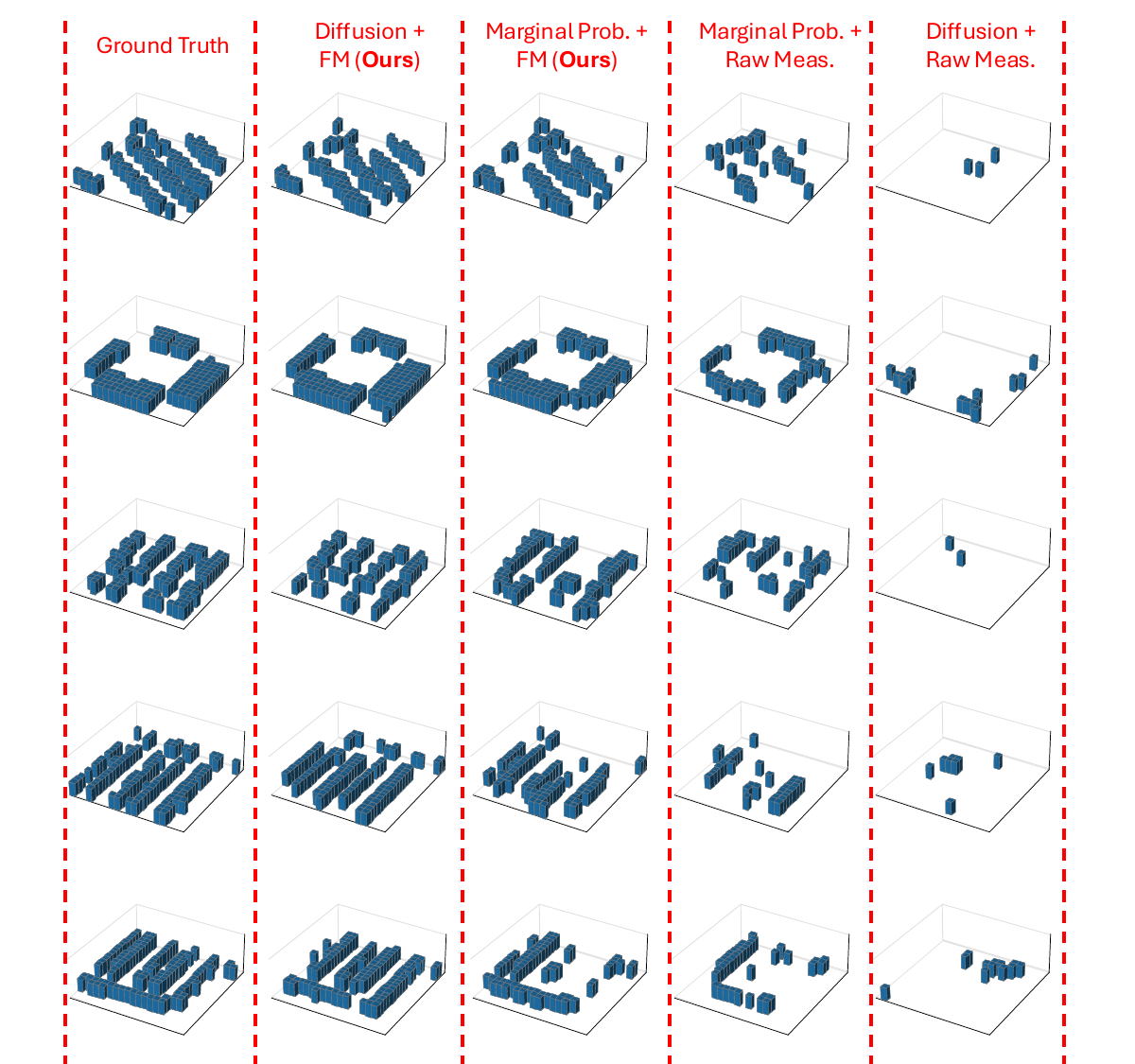}
    \vspace{-5mm}
    \caption{Visual comparison of scene reconstructions produced by all models against the ground-truth scenes.}
    \label{fig:vis_scene_all_2}
    \vspace{-4mm}
\end{figure*}

\end{document}

%% file: math_commands.tex
\usepackage{amsmath,amsfonts,bm}

\def\eqref#1{equation~\ref{#1}}

\def\1{\bm{1}}

\DeclareMathAlphabet{\mathsfit}{\encodingdefault}{\sfdefault}{m}{sl}
\SetMathAlphabet{\mathsfit}{bold}{\encodingdefault}{\sfdefault}{bx}{n}

